%% file: main.tex
\documentclass{article}

\usepackage[main, final]{neurips_2026}

\usepackage[utf8]{inputenc} 
\usepackage[T1]{fontenc}    
\usepackage{float}
\usepackage{hyperref}       
\usepackage{url}            
\usepackage{booktabs}       
\usepackage{amsfonts}       
\usepackage{mathtools}
\usepackage{nicefrac}       
\usepackage{microtype}      
\usepackage{xcolor}         
\usepackage{graphicx}
\usepackage{adjustbox}
\usepackage{multirow}
\usepackage[strings]{underscore}
\usepackage{makecell}
\usepackage{threeparttable}

\usepackage{amsmath}
\usepackage[nameinlink,capitalize]{cleveref}

\usepackage{amsthm}
\newtheorem{theorem}{Theorem}[section]

\newtheorem{proposition}[theorem]{Proposition}
\newtheorem{assumption}{Assumption}
\newtheorem{lemma}[theorem]{Lemma}

\usepackage{algorithm}
\usepackage{algorithmic}
\usepackage{fontawesome5}

\usepackage{booktabs}
\usepackage{makecell} 
\usepackage{threeparttable}

\title{Denoising Implicit Variational Inference}

\author{%
    Zitong Wang\thanks{
    Equal contribution.
    } \quad
    Zekun Wu\footnotemark[1] \quad
    Longlin Yu \quad
    Cheng Zhang\thanks{Correspondence to Cheng Zhang \texttt{<chengzhang@math.pku.edu.cn>}.}
    \\[0.5em]
    School of Mathematical Sciences\\
    Peking University
}

\begin{document}

\maketitle

\begin{abstract}

Semi-implicit variational inference (SIVI) expands the representational capacity of variational families, but optimizing the Evidence Lower Bound (ELBO) remains challenging because the marginal score of the semi-implicit distribution is generally intractable.
Unbiased Implicit Variational Inference (UIVI) and related methods address this difficulty through reparameterized ELBO gradients, but existing approaches estimate the required score indirectly through reverse-conditional sampling or approximation, which can be computationally costly and unstable.
We propose Denoising Implicit Variational Inference (DIVI), which learns the marginal score directly via denoising score matching.
The learned score is used as a plug-in estimator in the pathwise ELBO gradient, replacing inner reverse-conditional sampling with a score-network evaluation.
We analyze DIVI as inexact stochastic gradient ascent and show, under stated assumptions, that the averaged stationarity measure is controlled by the usual optimization term and the average score-matching error.
Empirically, we evaluate DIVI on both synthetic distributions and a variety of real-data Bayesian inference tasks. The results show that DIVI improves over the current UIVI baselines while reducing the cost of ELBO-gradient estimation.
Code is available at \faGithub\hspace{0.4em}\url{https://github.com/VincentWangzt/DIVI}.
\end{abstract}

\section{Introduction}
\label{sec:intro}

Bayesian inference provides a principled framework for uncertainty quantification, but exact posterior inference is often intractable in modern probabilistic models. Variational Inference (VI) addresses this difficulty by casting posterior approximation as an optimization problem: given a variational family $\mathcal{Q}$, VI seeks a tractable approximation $q \in \mathcal{Q}$ by maximizing the Evidence Lower Bound (ELBO), equivalently minimizing a KL divergence up to an additive constant~\citep{Jordan1999Introduction, zhang2018advances} .
The choice of $\mathcal{Q}$ determines the tradeoff between computational tractability and approximation quality.
Classical mean-field VI is computationally convenient due to its factorized structure, but is often too restrictive to capture posterior dependencies and complex geometry~\citep{peterson1987mean, saul1996mean,turner2011two, blei2017variational}.

Subsequent work has relaxed these restrictions through black-box gradient estimators and richer variational families, including auxiliary-variable models, structured approximations, and normalizing flows, substantially improving the expressiveness of VI~\citep{Ranganath2014Black,mnih2014neural,paisley2012variational,giordano2015linear,kingma2016improved,dinh2016density}.
However, many such methods still require evaluating and differentiating the variational density, which restricts the admissible variational family.

Implicit VI relaxes this restriction by defining variational distributions through sampling procedures, such as neural generators, without requiring tractable densities~\citep{Tran2017HierarchicalIM, Huszr2017VariationalIU, Shi18kivi}.
The central obstacle then shifts to ELBO optimization, since marginal log-density and marginal score are no longer directly available.
Semi-implicit variational inference (SIVI)~\citep{yin2018semi, moens2021efficientsivi} offers a compromise by combining an explicit conditional distribution with an implicit mixing distribution.
The resulting variational family is easy to sample from and highly expressive, but exact ELBO gradients require the marginal score $\nabla_z \log q_\phi(z)$, which is intractable because it involves integrating over the mixing variable.

Existing ELBO-based SIVI methods address this marginal-score bottleneck indirectly.
Classical SIVI optimizes biased surrogate ELBOs, whereas unbiased implicit variational inference (UIVI)~\citep{titsias2019unbiased} derives an unbiased gradient estimator of the exact ELBO assuming access to samples from the reverse conditional distribution over the mixing variable. In practice, this reverse conditional is approximated by inner-loop Markov chain Monte Carlo (MCMC) sampling, which can be expensive and unstable when the reverse conditional is high-dimensional or multimodal.
Importance-weighted variants reduce the reliance on MCMC by using importance sampling from proposal distributions~\citep{Sobolev2019ImportanceWH}, and recent AISIVI-style methods improve the proposal through conditional normalizing flows (CNFs)~\citep{pielok2025revisiting}.
However, these approaches still depend on an inner approximation to the reverse conditional, resulting in difficulties to control the variance of the gradient estimator in practice. The sampling-based approximation techniques also add computational overhead to the optimization procedure.

In this paper, we propose Denoising Implicit Variational Inference (DIVI), which estimates the marginal score of a semi-implicit variational distribution directly through denoising score matching (DSM)~\citep{vincent2011aconnectionbetweenscorematching}. The key observation is that samples from the semi-implicit hierarchy provide pairs $(z,\epsilon)$ for which the conditional score $\nabla_z\log q_\phi(z|\epsilon)$ is tractable. Instead of sampling from an approximate reverse conditional, DIVI trains a score network to regress the marginal score from these conditional-score targets.
The learned score is then used as a plug-in estimator in the pathwise ELBO gradient, replacing inner proposal-based approximation with a single score-network evaluation.


This formulation yields both an efficient algorithm and a tractable optimization analysis.
Our main contributions are summarized as follows:
\begin{itemize}
    \item \textbf{Direct marginal-score learning for semi-implicit VI.}
    We propose DIVI, which learns the marginal score of a semi-implicit variational distribution via denoising score matching and uses the learned score as a plug-in estimator in the pathwise ELBO gradient, replacing inner reverse-conditional sampling with a single score-network evaluation.
    \item \textbf{Convergence analysis with inexact score estimates.}
    We analyze DIVI through the lens of inexact stochastic gradient ascent and show, under standard smoothness and boundedness assumptions, that the averaged stationarity measure is controlled by the usual optimization error and the average score-matching error.
    \item \textbf{Empirical validation on synthetic distribution and real-data Bayesian inference tasks.}
    Across these settings, DIVI recovers challenging posterior structure, matches or improves upon current baselines, and achieves favorable wall-clock efficiency in high-dimensional diffusion and Bayesian neural-network experiments.
\end{itemize}

\section{Background}
\subsection{Semi-implicit Variational Family}
SIVI~\citep{yin2018semi} posits a flexible family that mixes an explicit tractable conditional layer $q_\phi(z| \epsilon)$ with a simple mixing distribution $q(\epsilon)$:
$$
\epsilon \sim q(\epsilon), \qquad z \sim q_\phi(z| \epsilon), \qquad
q_\phi(z) = \int q_\phi(z| \epsilon)\, q(\epsilon)\,\mathrm{d}\epsilon.
$$
This construction makes $q_\phi(z)$ semi-implicit: sampling is straightforward via drawing from $q(\epsilon)$ followed by $q_\phi(z| \epsilon)$, while evaluating the marginal density $q_\phi(z)$ is generally intractable due to the integral. This intractability of $\log q_\phi(z)$ makes optimizing the exact ELBO challenging, and SIVI employs tractable surrogate objectives that become tight asymptotically.
We write $s_\phi(z):=\nabla_z\log q_\phi(z)$ for the marginal score and $s_\phi(z|\epsilon):=\nabla_z\log q_\phi(z|\epsilon)$ for the conditional score.

\subsection{Unbiased Implicit Variational Inference}
Instead of optimizing surrogate ELBOs for semi-implicit variational families, 
UIVI targets the exact ELBO and derives an unbiased gradient estimator~\citep{titsias2019unbiased}. We follow the setup in UIVI, adopting the mild assumption of reparameterizability: there exists a base noise distribution $q(u)$ and a deterministic map $h_\phi$ such that
\begin{equation}
u\sim q(u),\qquad z=h_\phi(u;\epsilon)
\quad \Longleftrightarrow\quad
z\sim q_\phi(z| \epsilon).
\label{eq:uivi-reparam}
\end{equation}
Denote the dataset as $\mathcal{D}$. With the reparameterization $z=h_\phi(u;\epsilon)$, the exact ELBO can be rewritten as
\begin{align}
\mathcal L(\phi)
&= \mathbb E_{q_\phi(z)}\big[\log p(\mathcal{D},z)-\log q_\phi(z)\big] \nonumber\\
&= \mathbb E_{q(\epsilon)\,q(u)}
\Big[\log p(\mathcal{D},z)-\log q_\phi(z)\Big]\Big|_{z=h_\phi(u;\epsilon)}.
\label{eq:elbo-rewrite}
\end{align}
Pushing $\nabla_\phi$ inside the expectation yields the following decomposition, where the term $\mathbb{E}_{z \sim q_{\phi}}\nabla_\phi\log q_\phi(z)$ vanishes under the usual regularity conditions.
\begin{equation}
\nabla_\phi \mathcal L(\phi)
=\mathbb E_{q(\epsilon)q(u)}
\Big(\nabla_z\log p(\mathcal{D},z)-\nabla_z\log q_{\phi}(z)\Big)\Big|_{z=h_{\phi}(u, \epsilon)}\cdot\nabla_{\phi}h_{\phi}(u, \epsilon),
\label{eq:grad-split}
\end{equation}

The score of the joint distribution $\log p(\mathcal{D}, z)$ is accessible. 
The main difficulty lies in the term $s_{\phi}=\nabla_{z}\log q_{\phi}(z)$, which is intractable due to the mixture integral.
UIVI resolves this by using the identity
\begin{equation}
\nabla_z\log q_\phi(z)
=\mathbb E_{\epsilon \sim q_\phi(\epsilon| z)}
\big[\nabla_z\log q_\phi(z| \epsilon)\big]
\label{eq:score-identity}
\end{equation}
expressing the marginal score as an expectation under the reverse conditional $q_\phi(\epsilon| z) \propto q(\epsilon)q_\phi(z|\epsilon)$. This setting is also applicable to sampling from an unnormalized distribution: one only needs to replace \(\nabla_z\log p(\mathcal D,z)\) with the gradient of the log unnormalized density.

In practice, UIVI estimates the expectation in~\eqref{eq:score-identity} using an inner-loop MCMC sampler targeting $q_\phi(\epsilon| z)$, which can be computationally expensive in high-dimensional settings.
\citet{titsias2019unbiased} argue that only a few MCMC steps may suffice because chains are initialized at a plausible state.
However, the samples may fail to decorrelate when $q_\phi(\epsilon| z)$ has complex geometry.
Importance-sampling variants replace MCMC samples with samples from a proposal distribution~\citep{Sobolev2019ImportanceWH}, while AISIVI~\citep{pielok2025revisiting} takes a step further by incorporating CNFs to approximate the reverse conditional.
These methods still rely on an inner approximation to $q_\phi(\epsilon|z)$, so the resulting score estimator can still suffer from numerical instability and high variance, especially when the reverse distribution has not yet been approximated well.

\section{Methodology}
In this section, we present Denoising Implicit Variational Inference (DIVI).
The standard pathwise ELBO gradient in~\eqref{eq:grad-split} is available once the marginal score $s_\phi(z)=\nabla_z\log q_\phi(z)$ is known.
The proposed component of DIVI is to learn a proxy score network $s_\psi(z)$ for this marginal score, avoiding the reverse-conditional sampling used by UIVI.
Training alternates between two steps: a $\psi$-step that fits $s_\psi(z)$ using tractable conditional scores, and a $\phi$-step that updates the semi-implicit variational distribution using a plug-in ELBO gradient.

\paragraph{$\psi$-step (learning the marginal score)} For a fixed variational parameter $\phi$, the conditional score $s_\phi(z|\epsilon)=\nabla_z\log q_\phi(z|\epsilon)$ is tractable, whereas the marginal score $s_\phi(z)$ is not.
By the score identity in~\eqref{eq:score-identity}, averaging the conditional score over the reverse conditional recovers the marginal score.
This motivates fitting $s_\psi(z)$ by regressing against the conditional score on samples generated from the semi-implicit hierarchy.
We therefore minimize the denoising score matching objective 
$$\min_{\psi} \mathbb{E}_{\epsilon \sim q(\epsilon), u \sim q(u)} \left[ \| s_{\psi}(z) - \nabla_z\log q_{\phi}(z|\epsilon)\|^2\right]$$
This objective is tractable because it only requires samples from $q(\epsilon)q(u)$ and evaluations of the conditional score.
Let the population loss for this step be denoted as $\mathcal{J}(\phi, \psi)$.

\begin{proposition}
\label{prop:pf-dsm}
Denote $\mathcal{E}^2(\phi, \psi)=\mathbb{E}_{q_{\phi}(z)}\|s_{\psi}(z)-\nabla_z\log q_{\phi}(z)\|^2$. Then we have
$$\mathcal{J}(\phi, \psi)=\mathcal{E}^2(\phi, \psi)+ \mathbb{E}_{z}\{\mathcal{I}(q_{\phi}(\epsilon|z))\}$$
where $\mathcal{I}(q_{\phi}(\epsilon|z)):=\mathbb{E}_{q_\phi(\epsilon| z)}\|\nabla_z \log q_{\phi}(\epsilon|z)\|^2$.

\end{proposition}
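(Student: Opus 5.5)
The plan is a bias--variance (Pythagorean) decomposition of the regression loss around the conditional mean of the target. First I will rewrite $\mathcal{J}$ as an expectation over the joint $q_\phi(z,\epsilon)=q(\epsilon)q_\phi(z|\epsilon)$. By the reparameterization in~\eqref{eq:uivi-reparam}, drawing $(\epsilon,u)\sim q(\epsilon)q(u)$ and setting $z=h_\phi(u;\epsilon)$ produces exactly this joint. I then factor the joint as $q_\phi(z)\,q_\phi(\epsilon|z)$, which gives
\[
\mathcal{J}(\phi,\psi)=\mathbb{E}_{q_\phi(z)}\,\mathbb{E}_{q_\phi(\epsilon|z)}\bigl\|s_\psi(z)-s_\phi(z|\epsilon)\bigr\|^2 .
\]

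Next I add and subtract the marginal score $s_\phi(z)$ inside the norm and expand the square. This produces three terms. The first is $\|s_\psi(z)-s_\phi(z)\|^2$, which does not depend on $\epsilon$; averaging it over $z$ gives $\mathcal{E}^2(\phi,\psi)$. The second is a cross term $2\langle s_\psi(z)-s_\phi(z),\,s_\phi(z)-s_\phi(z|\epsilon)\rangle$. For fixed $z$, its inner expectation over $q_\phi(\epsilon|z)$ vanishes by the score identity~\eqref{eq:score-identity}, because $\mathbb{E}_{q_\phi(\epsilon|z)}s_\phi(z|\epsilon)=s_\phi(z)$. The third is the residual $\mathbb{E}_{q_\phi(\epsilon|z)}\|s_\phi(z|\epsilon)-s_\phi(z)\|^2$.

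Finally I identify the residual with $\mathcal{I}$. Bayes' rule gives $\log q_\phi(\epsilon|z)=\log q_\phi(z|\epsilon)+\log q(\epsilon)-\log q_\phi(z)$. Since $q(\epsilon)$ does not depend on $z$, differentiating in $z$ yields $\nabla_z\log q_\phi(\epsilon|z)=s_\phi(z|\epsilon)-s_\phi(z)$. The residual is therefore $\mathcal{I}(q_\phi(\epsilon|z))$, and averaging over $z\sim q_\phi(z)$ completes the decomposition.

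The only delicate step is justifying the score identity~\eqref{eq:score-identity}, which the paper takes as given, together with the finiteness and integrability needed to expand the square and apply Fubini. I would state these as regularity assumptions: differentiation under the integral sign is allowed, and the conditional score is square-integrable under the joint distribution.
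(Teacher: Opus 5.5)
Your proposal is correct and follows essentially the same route as the paper's proof: add and subtract the marginal score, expand the square, show the cross term vanishes, and identify the residual with $\nabla_z\log q_\phi(\epsilon|z)=s_\phi(z|\epsilon)-s_\phi(z)$ via Bayes' rule. The only cosmetic difference is in the cross term: you kill it by citing the score identity~\eqref{eq:score-identity}, whereas the paper writes it as $\mathbb{E}_{q_\phi(\epsilon|z)}\nabla_z\log q_\phi(\epsilon|z)=0$, justified by differentiating under the integral sign; by your Bayes'-rule relation these are the same fact.
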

For fixed $\phi$, the second term in Proposition~\ref{prop:pf-dsm} is independent of $\psi$.
Thus, minimizing the tractable denoising score matching objective $\mathcal{J}(\phi,\psi)$ with respect to $\psi$ is equivalent to minimizing the marginal score error $\mathcal{E}^2(\phi,\psi)$.
In particular, the optimal score model satisfies $s_{\psi^*}(z)=\nabla_z\log q_\phi(z)$ $q_\phi$-a.e., yielding $\mathcal{E}(\phi, \psi^*)=0$.
The proof is in Appendix~\ref{app:pf-dsm}.

\paragraph{$\phi$-step (ELBO optimization with a plug-in score)} Given the learned score network $s_\psi(z)$, we approximate the exact ELBO gradient by replacing the intractable marginal score.
Using the reparameterization $z = h_{\phi}(u, \epsilon)$ with $\epsilon \sim q(\epsilon)$ and $u \sim q(u)$, this gives the averaged surrogate gradient:
$$\hat{g}(\phi; \psi) := \frac{1}{N}\sum_{i=1}^{N} \left( \nabla_z\log p(\mathcal{D}, z_i) - s_{\psi}(z_i) \right) \nabla_{\phi}h_{\phi}(u_i, \epsilon_i)$$

We then update $\phi$ by stochastic gradient ascent on the ELBO:
$$\phi_{n+1}= \phi_n + \eta_{\phi} \hat{g}(\phi_n,\psi_n ).$$
At gradient-evaluation time, this approximation replaces reverse-conditional sampling with one evaluation of the learned score network, which, by design, reduces the computational cost of each ELBO-gradient estimate compared to previous sampling-based UIVI variants.

We implement DIVI by alternating between the score-fitting step and the ELBO-gradient step.
The complete procedure is presented in Algorithm~\ref{alg:DIVI}.

\begin{algorithm}[tb]
   \caption{DIVI Algorithm}
   \label{alg:DIVI}
\begin{algorithmic}[1]
   \STATE {\bfseries Require:} Dataset $\mathcal{D}$, Generator $z = h_\phi(u, \epsilon)$, Score Network $s_\psi(z)$, Priors $q(\epsilon), q(u)$.
   \STATE {\bfseries Require:} Learning rates $\eta_\phi, \eta_\psi$, Inner steps $L_\psi$, Batch sizes $N, B$.
   \STATE Initialize $\phi_0, \psi_{-1}$.
   \FOR{$n=0,1,2,\ldots$ until convergence}
   \STATE Set $\phi \leftarrow \phi_n$ and $\psi \leftarrow \psi_{n-1}$.
   \STATE {\color{gray} // $\psi$-step: fitting the marginal score.}
   \FOR{$k=1$ {\bfseries to} $L_\psi$}
   \STATE Sample $\epsilon^{(m)} \sim q(\epsilon), u^{(m)} \sim q(u), \; z^{(m)} \leftarrow h_\phi(u^{(m)}, \epsilon^{(m)})$ for $m=1 \dots B$.
   \STATE $\mathcal{J}_\psi \leftarrow \frac{1}{B} \sum_{m=1}^B \left\| s_\psi(z^{(m)}) - \nabla_z\log q_\phi(z^{(m)}|\epsilon^{(m)}) \right\|^2$.
   \STATE $\psi \leftarrow \psi - \eta_\psi \nabla_\psi \mathcal{J}_\psi$.
   \ENDFOR
   \STATE Set $\psi_n \leftarrow \psi$.
   
   \STATE {\color{gray} // $\phi$-step: optimizing the ELBO.}
   \STATE Sample $ \epsilon^{(i)} \sim q(\epsilon), u^{(i)} \sim q(u), \; z^{(i)} \leftarrow h_\phi(u^{(i)}, \epsilon^{(i)})$ for $i=1 \dots N$.
   \STATE $\hat{g}_n \leftarrow \frac{1}{N}\sum_{i=1}^{N}\left( \nabla_z \log p(\mathcal{D}, z^{(i)}) - s_{\psi_n}(z^{(i)}) \right) \nabla_\phi h_\phi(u^{(i)}, \epsilon^{(i)})$.
   \STATE $\phi_{n+1} \leftarrow \phi_n + \eta_\phi \hat{g}_n$.
   \ENDFOR
\end{algorithmic}
\end{algorithm}

To compare the different strategies concerning gradient approximation, we list the key features of related methods in \Cref{tab:comparison}.
The original SIVI provides a surrogate lower bound~\citep{yin2018semi}, not directly optimizing the ELBO.
However, the gradient of the surrogate lower bound also gives an approximation of the ELBO gradient~\citep{pielok2025revisiting}, with $\epsilon$ sampled from the prior distribution.

\begin{table}[bt]
\begin{threeparttable}
\centering
\caption{Comparison of gradient estimation strategies.}
\label{tab:comparison}
\renewcommand{\arraystretch}{1.8} 
\footnotesize 
\begin{tabular}{lllc}
\toprule
\textbf{Method} & \textbf{Score Estimator }$\hat{s}_{\phi}(z)$& \textbf{Estimate Approach} &\textbf{Strategy}\\
\midrule
SIVI & 

$\nabla_z\log\left(\frac{1}{K}\sum_{i=1}^{K}q_{\phi}(z|\epsilon_i)\right)$

&$\epsilon_{i}\overset{\text{i.i.d.}}{\sim}q(\epsilon)$& 
Surrogate Lower Bound\\
\midrule

UIVI  & 
$\frac{1}{K}\sum_{i=1}^{K}\nabla_z\log q_{\phi}(z|\epsilon_i)$& $\epsilon_i\overset{}{\sim}q_{\phi}(\epsilon|z)\text{ (approximately)}$ & 
Hamiltonian Monte Carlo \\

\midrule
AISIVI & 
$\nabla_z\log\left(\frac{1}{K}\sum_{i=1}^{K} \frac{q(\epsilon_i)q_\phi(z|\epsilon_i)}{q_\psi(\epsilon_i|z)} \right)$& $\epsilon_i\overset{\text{i.i.d.}}{\sim}q_{\psi}(\epsilon|z)$ &

Importance Sampling \\

\midrule
\textbf{DIVI} & 
${s_{\psi}(z)}$& 
${\min_{\psi} \mathbb{E}\left[ \| s_{\psi}(z) - \nabla_z\log q_{\phi}(z|\epsilon)\|^2\right]}$
&
\textbf{Denoising Score Matching} \\
\bottomrule
\end{tabular}

\begin{tablenotes}
\footnotesize
\item[*] We compute $\nabla_{\phi}\mathcal{L}_{\scriptsize{K-1}}$ of SIVI under the reparameterization framework, which provides an approximation for $\nabla_{\phi}\mathcal{L}$. 

\end{tablenotes}

\end{threeparttable}
\end{table}


\input{proof.tex}

\input{related_works.tex}

\input{experiments.tex}

\section{Conclusion}
\label{sec:conclusion}


We introduced Denoising Implicit Variational Inference (DIVI), an ELBO-based method for semi-implicit variational families that replaces reverse-conditional sampling with direct marginal-score learning.
By fitting a score network via denoising score matching against tractable conditional scores, DIVI yields a simple plug-in pathwise ELBO-gradient estimator and avoids the inner sampling step used by previous UIVI-style methods.
We analyzed the resulting procedure as inexact stochastic gradient ascent, showing that the averaged stationarity measure is controlled by the standard optimization term and the average score-matching error.
Empirically, DIVI performs competitively across synthetic targets, conditioned diffusion posteriors, and Bayesian neural-network regression benchmarks, while achieving favorable wall-clock time in high-dimensional settings.
Overall, these results suggest that denoising score estimation provides a practical route to scalable ELBO optimization with semi-implicit variational distributions.
As a limitation, our analysis relies on smoothness of the target objective and boundedness of the optimization trajectory, and finite-capacity score networks may leave non-vanishing residual score-estimation error. We also acknowledge the performance in complex multimodal settings as a to-be-improved aspect, which might be mitigated by complex hierarchical variational family, though we leave these for future work.

\begin{ack}

This work was supported by National Natural Science Foundation of China (grant no. 12671341, grant no. 12292980 and grant no. 12292983).
The research of Cheng Zhang was support in part by National Engineering Laboratory for Big Data Analysis and Applications, the Key Laboratory of Mathematics and Its Applications (LMAM) and the Key Laboratory of Mathematical Economics and Quantitative Finance (LMEQF) of Peking University.
The authors thank Zhifei Li and Haoyu Tao for the helpful discussions and feedback in the preliminary stages of this work.
\end{ack}

\medskip
\bibliographystyle{plainnat}
\bibliography{references}

\newpage
\appendix

\input{proof_appendix}

\input{experiment_appendix}


\end{document}

%% file: proof.tex
\section{Theoretical Analysis}\label{sec:theory}





In this section, we analyze the optimization error introduced by replacing the exact marginal score with the learned score network.
The convergence analysis of black-box variational inference (BBVI) has been studied in general settings \citep{Ranganath14Operator, kim2024convergenceblackboxvariationalinference, domke2023provable}, but convergence guarantees for specific semi-implicit approaches remain limited \citep{moens2021efficientsivi, cheng2024kernel}.
For ELBO-based semi-implicit methods such as UIVI and AISIVI, the main difficulty is to control the error of the score estimator used inside the pathwise gradient.
DIVI makes this error explicit through the score-matching residual $\mathcal{E}(\phi,\psi)$ and the corresponding optimization process can be viewed as an inexact stochastic gradient procedure, allowing us to use tools for biased stochastic gradient ascent \citep{Ajalloeian20biased, Demidovich23zoo}.

We use the Gaussian reparameterized form
\(z=h_\phi(u,\epsilon)=\mu_\phi(\epsilon)+u\odot\sigma_\phi(\epsilon)\), where
\(u\sim\mathcal N(0,I_d)\) and \(\epsilon\in\mathbb R^{d_\epsilon}\) is typically Gaussian or uniform. At iteration \(n\), let \(\mathcal F_n\) denote the filtration generated by all randomness before the score estimator is updated. After obtaining \(\psi_n\), define the enlarged filtration \(\mathcal G_n=\mathcal F_n\vee\sigma(\psi_n)\). We use the following assumptions.

\begin{assumption}
\label{assump:smoothness}
The objective \(\mathcal L(\phi)\) is twice continuously differentiable and \(L\)-smooth with respect to \(\phi\).
\end{assumption}

\begin{assumption}
\label{assump:bounded_score}
There exists a constant \(M>0\) such that
$\mathbb E_{z\sim q_\phi}
\|\nabla_z\log p(\mathcal D,z)\|^4
\le M$, $
\mathbb E_{z\sim q_\phi}
\|s_\psi(z)\|^4
\le M$ holds uniformly for $\phi, \psi$. 
\end{assumption}

\begin{assumption}
\label{assump:bounded_reparam}
There exists a constant \(C>0\) such that
$
\max\left\{
\mathbb E_{\epsilon}\|\nabla_{\phi}\sigma_{\phi}(\epsilon)\|_2^4,
\mathbb E_{\epsilon}\|\nabla_{\phi}\mu_{\phi}(\epsilon)\|_2^4
\right\}
\le C
$ uniformly holds for $\phi$, 
where \(\|\cdot\|_2\) denotes the operator norm.
\end{assumption}
The uniform boundedness conditions are understood on the algorithmic domain, where the iteration sequence $(\phi_n,\psi_n)$ is assumed to reside almost surely in a deterministic compact domain. 
In practice, this can be guaranteed by projected updates, which are standard in stochastic optimization analysis \citep{ghadimi2016mini}. 
The smoothness condition is standard in non-convex stochastic optimization. 
Assumption~\ref{assump:bounded_score} controls the fourth moment of the score-error factor in the surrogate gradient through a uniform moment bound in expectation. 
We provide an additional derivation of this assumption using the linear growth conditions  in \citep{cheng2024kernel} in Appendix~\ref{app:linear-growth}.
Assumption~\ref{assump:bounded_reparam} is specific to our Gaussian reparameterization and controls the component-wise Jacobians of \(\mu_\phi\) and \(\sigma_\phi\). 
Numerical validations along the optimization path are provided in Appendix~\ref{app: numerical-experiments}.

\begin{lemma}
\label{lem:jacobian_moment}
The fourth moment of the operator norm of the reparameterization Jacobian is bounded, i.e., 
\[
\mathbb E_{u,\epsilon}\|\nabla_\phi h_\phi(u,\epsilon)\|_2^4\le C_h,
\quad
C_h:=8\left(4\log^2(2d)+8\log(2d)+9\right)C
=\mathcal O(C(\log d)^2).
\]
\end{lemma}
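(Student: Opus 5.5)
Since $h_\phi(u,\epsilon)=\mu_\phi(\epsilon)+u\odot\sigma_\phi(\epsilon)$, the Jacobian splits as
\[
\nabla_\phi h_\phi(u,\epsilon)=\nabla_\phi\mu_\phi(\epsilon)+\operatorname{diag}(u)\,\nabla_\phi\sigma_\phi(\epsilon).
\]
By submultiplicativity of the operator norm and $\|\operatorname{diag}(u)\|_2=\|u\|_\infty$,
\[
\|\nabla_\phi h_\phi\|_2\le \|\nabla_\phi\mu_\phi(\epsilon)\|_2+\|u\|_\infty\,\|\nabla_\phi\sigma_\phi(\epsilon)\|_2 .
\]
Applying $(a+b)^4\le 8(a^4+b^4)$ gives
\[
\|\nabla_\phi h_\phi\|_2^4\le 8\|\nabla_\phi\mu_\phi\|_2^4+8\|u\|_\infty^4\|\nabla_\phi\sigma_\phi\|_2^4 .
\]
Because $u\sim\mathcal N(0,I_d)$ is independent of $\epsilon$, the second expectation factorizes as $\mathbb E\|u\|_\infty^4\cdot\mathbb E_\epsilon\|\nabla_\phi\sigma_\phi\|_2^4$. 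Assumption~\ref{assump:bounded_reparam} then yields
\[
\mathbb E\|\nabla_\phi h_\phi\|_2^4\le 8C\bigl(1+\mathbb E\|u\|_\infty^4\bigr).
\]
So everything reduces to showing $1+\mathbb E\|u\|_\infty^4\le 4\log^2(2d)+8\log(2d)+9$, i.e.\ $\mathbb E\|u\|_\infty^4\le 4\log^2(2d)+8\log(2d)+8$.

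For this Gaussian maximal moment I would write $\|u\|_\infty=\max_{i\le d}|u_i|$ and use the standard tail bound
\[
P(\|u\|_\infty>t)\le 2d\,e^{-t^2/2}
\]
(union bound plus the Gaussian tail $P(|u_i|>t)\le 2e^{-t^2/2}$). Writing $Y=\|u\|_\infty^2$, this reads $P(Y>s)\le \min\{1,2d\,e^{-s/2}\}$. The second moment is
\[
\mathbb E Y^2=\int_0^\infty 2s\,P(Y>s)\,ds .
\]
I would split the integral at $s_0=2\log(2d)$, where $2d\,e^{-s/2}=1$. On $[0,s_0]$ bound the probability by $1$, contributing $s_0^2=4\log^2(2d)$. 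On $[s_0,\infty)$ substitute $s=s_0+r$, so the tail integrand becomes $2(s_0+r)e^{-r/2}$. Since $\int_0^\infty e^{-r/2}dr=2$ and $\int_0^\infty r e^{-r/2}dr=4$, this piece contributes $4s_0+8=8\log(2d)+8$. Summing gives exactly $\mathbb E\|u\|_\infty^4\le 4\log^2(2d)+8\log(2d)+8$. Adding the $1$ from the $\mu$ term recovers the constant $4\log^2(2d)+8\log(2d)+9$ in $C_h$. The $\mathcal O(C(\log d)^2)$ form follows immediately.

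The main obstacle is getting this maximal-moment calculation to produce exactly the stated constants. The split point and the tail constant $2$ must be chosen so the three terms come out as $4\log^2$, $8\log$, and $8$. I would first verify the tail bound $P(|g|>t)\le 2e^{-t^2/2}$, valid for all $t\ge0$, which avoids needing a $t\ge1$ caveat.

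One more point to confirm is that the operator norm of the Jacobian with respect to the parameter vector obeys the diag factorization. The matrix $\nabla_\phi\sigma_\phi$ has rows indexed by the output coordinates, and $\operatorname{diag}(u)$ scales those rows, so $\|\operatorname{diag}(u)A\|_2\le\|u\|_\infty\|A\|_2$ holds.
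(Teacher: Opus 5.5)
Your proposal is correct and matches the paper's proof: the same Jacobian split $\nabla_\phi\mu_\phi+\operatorname{diag}(u)\nabla_\phi\sigma_\phi$, the same bound $(a+b)^4\le 8(a^4+b^4)$ with independence factorization, and the same union-bound Gaussian tail estimate. Your tail integral in $s=\|u\|_\infty^2$, split at $2\log(2d)$, is the paper's integral in $x=\|u\|_\infty^4$, split at $4\log^2(2d)$, under the change of variables $x=s^2$, and it yields the identical constant $4\log^2(2d)+8\log(2d)+8$.
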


We next analyze the stochastic optimization dynamics. Since \(\phi_n\) and \(\psi_n\) are
\(\mathcal G_n\)-measurable,
conditioned on \(\mathcal G_n\), the samples \(z_n^{(1)},\ldots,z_n^{(N)}\overset{i.i.d.}{\sim}q_{\phi_n}\), where \(z_n^{(i)}=h_{\phi_n}(u_n^{(i)},\epsilon_n^{(i)})\), are independently generated from the current variational distribution.
Let \(\hat g_n\) be the mini-batch surrogate gradient defined in Algorithm~\ref{alg:DIVI}. We use the conditional decomposition
\[
\hat g_n
=
\nabla_\phi\mathcal L(\phi_n)+b_n+\xi_n,
\]
where \(b_n:=\mathbb E[\hat g_n\mid\mathcal G_n]-\nabla_\phi\mathcal L(\phi_n)\) is the conditional bias, and \(\xi_n:=\hat g_n-\mathbb E[\hat g_n\mid\mathcal G_n]\) is the martingale difference noise with \(\mathbb E[\xi_n\mid\mathcal G_n]=0\). 

Denote
$
\mathcal E_n^2
:=
\mathcal E^2(\phi_n,\psi_n)
=
\mathbb E_{z\sim q_{\phi_n}}\!\left[
\|s_{\phi_n}(z_n)-s_{\psi_n}(z_n)\|^2
\mid \mathcal G_n
\right]
$, which is \(\mathcal G_n\)-measurable. We can control the bias and variance as follows:

\begin{proposition}
\label{prop:error}
For constants \(J=\sqrt{C_h}\) and \(\Sigma^2=4\sqrt{MC_h}\) we have
$\|b_n\|^2\le J\mathcal E_n^2$,
 $\mathbb E[\|\xi_n\|^2\mid\mathcal G_n]\le \frac{\Sigma^2}{N}.
$
\end{proposition}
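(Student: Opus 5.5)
Conditioned on $\mathcal G_n$, the parameters $\phi_n,\psi_n$ are fixed and the $N$ samples $(u_n^{(i)},\epsilon_n^{(i)})$ are i.i.d. So I would treat both bounds as statements about a single fixed pair $(\phi,\psi)=(\phi_n,\psi_n)$. Write $g(u,\epsilon):=(\nabla_z\log p(\mathcal D,z)-s_\psi(z))\nabla_\phi h_\phi(u,\epsilon)$ at $z=h_\phi(u,\epsilon)$, so that $\hat g_n$ is the average of $N$ i.i.d. copies of $g$.

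\emph{Bias.} By \eqref{eq:grad-split}, $\nabla_\phi\mathcal L(\phi_n)$ is the same expectation with $s_\psi$ replaced by the true score $s_{\phi_n}$. Hence
\[
b_n=\mathbb E_{u,\epsilon}\big[(s_{\phi_n}(z)-s_{\psi_n}(z))\nabla_\phi h_{\phi_n}(u,\epsilon)\big].
\]
I would apply Jensen and then Cauchy--Schwarz, using the operator norm:
\[
\|b_n\|^2\le \Big(\mathbb E\|s_{\phi_n}-s_{\psi_n}\|^2\Big)\Big(\mathbb E\|\nabla_\phi h_{\phi_n}\|_2^2\Big).
\]
By Jensen again, $\mathbb E\|\nabla_\phi h\|_2^2\le (\mathbb E\|\nabla_\phi h\|_2^4)^{1/2}\le\sqrt{C_h}$ by Lemma~\ref{lem:jacobian_moment}. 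The first factor equals $\mathcal E_n^2$ because $z=h_{\phi_n}(u,\epsilon)\sim q_{\phi_n}$. This gives $J=\sqrt{C_h}$.

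\emph{Variance.} Since the summands are i.i.d. given $\mathcal G_n$,
\[
\mathbb E[\|\xi_n\|^2\mid\mathcal G_n]=\frac1N\operatorname{Var}(g)\le\frac1N\mathbb E\|g\|^2.
\]
I would then bound
\[
\|g\|^2\le\|\nabla_z\log p-s_\psi\|^2\,\|\nabla_\phi h\|_2^2
\]
and apply Cauchy--Schwarz:
\[
\mathbb E\|g\|^2\le\big(\mathbb E\|\nabla_z\log p-s_\psi\|^4\big)^{1/2}\big(\mathbb E\|\nabla_\phi h\|_2^4\big)^{1/2}.
\]
For the first factor, $\|a-b\|^4\le 8(\|a\|^4+\|b\|^4)$, and Assumption~\ref{assump:bounded_score} bounds each term by $M$, so the factor is at most $\sqrt{16M}=4\sqrt M$. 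The second factor is at most $\sqrt{C_h}$, so $\Sigma^2=4\sqrt{MC_h}$.

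\emph{Main obstacle.} The step that needs care is justifying that $\mathbb E[\hat g_n\mid\mathcal G_n]$ differs from $\nabla_\phi\mathcal L(\phi_n)$ only through the score term. This relies on \eqref{eq:grad-split} (the vanishing of $\mathbb E\nabla_\phi\log q_\phi$) and on interchanging differentiation and expectation. That interchange is legitimate under Assumption~\ref{assump:smoothness} together with the moment bounds on the compact algorithmic domain. The remaining work is bookkeeping: the fresh samples are independent of $\psi_n$, and $\phi_n,\psi_n$ are $\mathcal G_n$-measurable, so the uniform bounds apply.
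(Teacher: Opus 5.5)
Your proposal is correct and matches the paper's proof step for step. For the bias you use Jensen, the operator-norm Cauchy--Schwarz bound and $\mathbb E\|\nabla_\phi h_\phi\|_2^2\le\sqrt{C_h}$; for the variance you use $\frac1N\mathbb E\|g\|^2$, Cauchy--Schwarz on fourth moments and $\|a-b\|^4\le 8(\|a\|^4+\|b\|^4)$, which gives exactly $J=\sqrt{C_h}$ and $\Sigma^2=4\sqrt{MC_h}$.

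One quibble concerns your ``main obstacle'' remark. Assumption~\ref{assump:smoothness} and the moment bounds do not by themselves justify the interchange behind \eqref{eq:grad-split}, since the moment bounds control $s_\psi$ rather than the true score $s_\phi$. However, the paper also takes that identity as a standing regularity premise, so nothing in your argument for this proposition depends on it.
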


Using these bounds, we obtain the following modified ascent lemma for DIVI optimization. Compared with the standard stochastic ascent inequality, the bound contains two additional error terms: the Monte Carlo variance term, scaled by \(\Sigma^2/N\), and the score-estimation bias term, scaled by \(\mathcal E_n^2\).

\begin{lemma}
\label{lem:ascent}
If the fixed stepsize \(\eta\le 1/(4L)\), we have
\[
\mathbb E[\mathcal L(\phi_{n+1})\mid\mathcal G_n]
\ge
\mathcal L(\phi_n)
+\frac{\eta}{4}\|\nabla_\phi\mathcal L(\phi_n)\|^2
-\frac{\eta^2L\Sigma^2}{2N}
-\frac{3\eta J}{4}\mathcal E_n^2.
\]
\end{lemma}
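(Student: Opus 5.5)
We will use the standard descent-lemma argument for $L$-smooth functions (Assumption~\ref{assump:smoothness}), written for ascent. Since $\phi_{n+1}=\phi_n+\eta\hat g_n$, $L$-smoothness gives
\[
\mathcal L(\phi_{n+1})\ge \mathcal L(\phi_n)+\eta\langle\nabla_\phi\mathcal L(\phi_n),\hat g_n\rangle-\frac{L\eta^2}{2}\|\hat g_n\|^2 .
\]
We take conditional expectation given $\mathcal G_n$. The point is that $\phi_n$, $\psi_n$, $\nabla_\phi\mathcal L(\phi_n)$ and $b_n$ are all $\mathcal G_n$-measurable, and $\mathbb E[\xi_n\mid\mathcal G_n]=0$. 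Writing $g:=\nabla_\phi\mathcal L(\phi_n)$, the linear term therefore becomes $\eta\|g\|^2+\eta\langle g,b_n\rangle$. For the quadratic term the cross terms with $\xi_n$ vanish, so
\[
\mathbb E[\|\hat g_n\|^2\mid\mathcal G_n]=\|g+b_n\|^2+\mathbb E[\|\xi_n\|^2\mid\mathcal G_n].
\]

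Next we bound these pieces with Proposition~\ref{prop:error} and Young's inequality. For the inner product we use $\langle g,b_n\rangle\ge -\frac12\|g\|^2-\frac12\|b_n\|^2$. For the squared norm we use $\|g+b_n\|^2\le 2\|g\|^2+2\|b_n\|^2$. Collecting terms, the coefficient of $\|g\|^2$ is $\eta/2-L\eta^2$, which is at least $\eta/4$ when $\eta\le 1/(4L)$. The coefficient of $\|b_n\|^2$ is $-(\eta/2+L\eta^2)$, which is at least $-3\eta/4$ under the same stepsize condition. The noise contributes $-\frac{L\eta^2}{2}\,\mathbb E[\|\xi_n\|^2\mid\mathcal G_n]\ge -\frac{\eta^2L\Sigma^2}{2N}$. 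Finally, substituting $\|b_n\|^2\le J\mathcal E_n^2$ yields exactly the stated bound with constants $\eta/4$, $\eta^2L\Sigma^2/(2N)$ and $3\eta J/4$.

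There is no substantial obstacle. The two points needing care are:
\begin{itemize}
\item \emph{Measurability.} $b_n$ must be treated as deterministic given $\mathcal G_n$, which is what makes the cross term $\langle b_n,\xi_n\rangle$ vanish in conditional expectation.
\item \emph{Young splits.} The splits must be chosen so that the stepsize condition $\eta\le 1/(4L)$ produces exactly the constants $1/4$ and $3/4$. The split $\tfrac12/\tfrac12$ for the inner product combined with the factor $2$ bound for the square does this.
\end{itemize}
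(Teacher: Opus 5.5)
Your proposal is correct and matches the paper's proof step for step. The paper also uses the $L$-smoothness lower bound, the conditional decomposition $\hat g_n=\nabla_\phi\mathcal L(\phi_n)+b_n+\xi_n$ with $b_n$ being $\mathcal G_n$-measurable and $\mathbb E[\xi_n\mid\mathcal G_n]=0$, and the same Young splits $\langle g,b_n\rangle\ge-\tfrac12\|g\|^2-\tfrac12\|b_n\|^2$ and $\|g+b_n\|^2\le 2\|g\|^2+2\|b_n\|^2$. It then applies $\eta\le 1/(4L)$ in the same way to turn the coefficients $\tfrac{\eta}{2}(1-2\eta L)$ and $\tfrac{\eta}{2}(1+2\eta L)$ into $\eta/4$ and $3\eta/4$.
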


By summing the one-step ascent inequality over \(n=0,\ldots,T-1\), we obtain an averaged stationarity bound:

\begin{theorem}
\label{thm:convergence}
Let \(\Delta_0:=\mathcal L^*-\mathcal L(\phi_0)\). Under the assumptions above, if \(\eta\le 1/(4L)\), then
\[
\frac{1}{T}\sum_{n=0}^{T-1}\mathbb E\|\nabla_\phi\mathcal L(\phi_n)\|^2
\le
\left(\frac{4\Delta_0}{\eta T}
+\frac{2\eta L\Sigma^2}{N}
\right)+3J\left(\frac{1}{T}\sum_{n=0}^{T-1}\mathbb E\mathcal E_n^2\right).
\]
In particular, if \(T\ge 32L\Delta_0 N/\Sigma^2\), then the balancing stepsize
\(\eta\asymp \sqrt{\Delta_0 N/(L\Sigma^2T)}\) is admissible, and the stochastic optimization part is
$
\mathcal O\!\left(\Sigma\sqrt{\frac{L\Delta_0}{NT}}\right).
$
\end{theorem}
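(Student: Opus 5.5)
The plan is to telescope Lemma~\ref{lem:ascent}. I will take total expectations on both sides of the one-step inequality. Because $\mathcal L(\phi_n)$, $\nabla_\phi\mathcal L(\phi_n)$ and $\mathcal E_n^2$ are all $\mathcal G_n$-measurable, the tower property gives
\[
\mathbb E\,\mathcal L(\phi_{n+1})\ge \mathbb E\,\mathcal L(\phi_n)+\frac{\eta}{4}\mathbb E\|\nabla_\phi\mathcal L(\phi_n)\|^2-\frac{\eta^2 L\Sigma^2}{2N}-\frac{3\eta J}{4}\mathbb E\,\mathcal E_n^2 .
\]
Summing over $n=0,\dots,T-1$ makes the left-hand side telescope to $\mathbb E\,\mathcal L(\phi_T)-\mathcal L(\phi_0)$. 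I then bound this from above by $\Delta_0=\mathcal L^*-\mathcal L(\phi_0)$, using that $\mathcal L(\phi)\le\mathcal L^*$; this holds because the ELBO is bounded above by the log evidence, or at least by its supremum over the compact algorithmic domain. Rearranging gives
\[
\frac{\eta}{4}\sum_{n=0}^{T-1}\mathbb E\|\nabla_\phi\mathcal L(\phi_n)\|^2\le \Delta_0+\frac{T\eta^2L\Sigma^2}{2N}+\frac{3\eta J}{4}\sum_{n=0}^{T-1}\mathbb E\,\mathcal E_n^2 .
\]
Multiplying by $4/(\eta T)$ yields exactly the stated bound, with coefficients $4\Delta_0/(\eta T)$, $2\eta L\Sigma^2/N$ and $3J$.

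For the ``in particular'' part, I minimize $f(\eta)=4\Delta_0/(\eta T)+2\eta L\Sigma^2/N$. The minimizer is $\eta^*=\sqrt{2\Delta_0N/(L\Sigma^2T)}$, which is of the stated order. At $\eta^*$ both terms are equal, and the minimum value is $2\sqrt{8\Delta_0L\Sigma^2/(NT)}=\mathcal O(\Sigma\sqrt{L\Delta_0/(NT)})$.

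It remains to check admissibility, $\eta^*\le 1/(4L)$. This is equivalent to $2\Delta_0N/(L\Sigma^2T)\le 1/(16L^2)$, that is, $T\ge 32L\Delta_0N/\Sigma^2$, which is exactly the stated threshold. If one instead takes $\eta=c\sqrt{\Delta_0N/(L\Sigma^2T)}$ with a constant $c\le\sqrt2$, the same condition still suffices and only the constants change.

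There is essentially no obstacle here; the substance is in Lemma~\ref{lem:ascent} and Proposition~\ref{prop:error}. The only delicate point is justifying that $\mathcal L^*<\infty$ and that the conditional expectations are integrable so the tower property applies. Both follow from the ELBO being bounded by the log marginal likelihood and from the compact-domain assumption on the iterates.
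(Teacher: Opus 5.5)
Your proposal is correct and follows the paper's proof: take total expectations in Lemma~\ref{lem:ascent}, telescope over $n=0,\dots,T-1$, use $\mathbb E\,\mathcal L(\phi_T)\le\mathcal L^*$, and divide by $\eta T/4$. Your explicit minimizer $\eta^*=\sqrt{2\Delta_0N/(L\Sigma^2T)}$, together with the check that $\eta^*\le 1/(4L)$ is equivalent to $T\ge 32L\Delta_0N/\Sigma^2$, is more explicit than the paper: the paper only writes down the balancing choice $\eta\asymp\min\{1/(4L),\sqrt{N\Delta_0/(L\Sigma^2T)}\}$ and never derives the threshold, so your version also accounts for where the constant $32$ comes from.
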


See detailed proofs in Appendix \ref{app: proofs}. Theorem~\ref{thm:convergence} separates the optimization error into two parts. The first part is the usual stochastic optimization error caused by finite mini-batch sampling; when the balancing stepsize is admissible, it decays as \(\mathcal O(T^{-1/2})\) for fixed \(N\). The second part is the averaged score-estimation error, which captures the bias introduced by replacing the intractable score \(s_{\phi_n}\) with the learned estimator \(s_{\psi_n}\). 
By Proposition~\ref{prop:pf-dsm}, the denoising score-matching objective has the marginal score as its population minimizer,
hence this residual can be reduced by sufficiently optimizing the inner score-learning problem, up to the statistical and approximation errors of the score model. We provide empirical evaluations of this quantity during the training process in Section~\ref{subsec:score-estimation}.
Thus, DIVI approaches stationarity to the extent that the learned score tracks the marginal score along the optimization path.

%% file: related_works.tex
\section{Related Work}

\paragraph{ELBO-based SIVI.}
Semi-implicit variational inference (SIVI) defines expressive variational distributions by mixing an explicit conditional distribution with an implicit mixing distribution, enabling efficient sampling while leaving the marginal density and marginal score intractable~\citep{yin2018semi,moens2021efficientsivi}.
The original SIVI objective optimizes tractable surrogate ELBO bounds, whereas UIVI targets the exact ELBO by estimating the marginal score through reverse-conditional sampling over the mixing variable~\citep{titsias2019unbiased}.
Importance-weighted and AISIVI-style methods reduce the reliance on MCMC by using proposal distributions, including expressive conditional normalizing flows~\citep{Sobolev2019ImportanceWH,pielok2025revisiting}.
DIVI is closest to these ELBO-gradient methods, but replaces inner reverse-conditional score estimation with a learned marginal-score proxy trained by denoising score matching.

\paragraph{Alternative SIVI objectives.}
Other SIVI methods avoid marginal-density evaluation by changing the optimization criterion.
Fisher-divergence-based SIVI uses score matching but leads to a minimax problem~\citep{yu2023semi}, while KSIVI uses kernel Stein discrepancy to obtain a standard minimization objective~\citep{cheng2024kernel,liu2016kernelized}.
Particle semi-implicit VI uses particles to represent the mixing distribution in a special semi-implicit family construction and approximates Euclidean--Wasserstein gradient flows~\citep{lim2025particlesemiimplicitvariationalinference}.
These approaches are complementary to DIVI: they modify the objective, representation, or gradient-flow geometry, whereas DIVI keeps the ELBO-gradient framework and addresses its marginal-score bottleneck through denoising score matching~\citep{hyvarinen2005estimationofNon-Normalizedscorematching,vincent2011aconnectionbetweenscorematching}.
Convergence guarantees for ELBO-based black-box VI and related semi-implicit methods have been studied under stochastic-gradient or surrogate-objective settings~\citep{Ranganath14Operator,kim2024convergenceblackboxvariationalinference,moens2021efficientsivi,cheng2024kernel}.
Our convergence analysis treats the resulting update as inexact ELBO-gradient optimization, with residual stationarity error controlled by the denoising score-matching error.


%% file: experiments.tex
\section{Experiments}
\label{sec:experiments}

We evaluate DIVI across posterior inference problems that increase in dimension and structure. 
The experiments investigate whether DIVI can capture visible posterior geometry on two-dimensional synthetic targets, approximate a 100-dimensional conditioned diffusion posterior accurately, and provide predictive posterior distributions for Bayesian neural network (BNN) regression.  In line with prior SIVI evaluations, we incorporate annealing techniques where appropriate. 
Unless stated otherwise, tables report the mean and standard error over 10 independent runs.

Our main comparisons are against UIVI and AISIVI \citep{titsias2019unbiased,pielok2025revisiting}.  We also include SIVI \citep{yin2018semi} and KSIVI \citep{cheng2024kernel} where their objectives apply. For the instantiation of the DIVI score model $\psi$, we use a multi-layer perceptron (MLP), which is warm-started before joint training. For each experiment, all methods share the same neural-network architecture for the semi-implicit variational family.
The variational family, evaluation metrics, and per-experiment configurations are collected in Appendix~\ref{app:experiment-details}. We also provide additional arguments to support our theoretical Assumption~\ref{assump:bounded_score} and Assumption~\ref{assump:bounded_reparam} in Appendix~\ref{app: numerical-experiments}.

\subsection{Toy two-dimensional targets}
\label{subsec:toy-experiments}

We first use low-dimensional  unnormalized distribution targets to check whether the learned semi-implicit distribution captures geometric features that are easy to inspect visually.

Following previous works~\citep{yu2023semi,li2023samplingmollifiedinteractionenergy, yu2023hierarchical}, we focus on the three plotted targets in Figure~\ref{fig:toy-scatter-grid}: \texttt{x\_shaped}, \texttt{student\_uc}, and \texttt{8\_gaussians}.  These targets probe for intersecting ridges, heavy-tailed dependence, and multi-modal coverage. We train SIVI, AISIVI, UIVI, and DIVI for 10,000 iterations and KSIVI for 50,000 iterations, with learning rate $10^{-3}$ and batch size 128. For DIVI, we train the proxy score MLP for 5 steps using 2048 samples and learning rate $10^{-3}$ after each update to the main VI model. Full per-method configurations and quantitative diagnostics are given in Appendix~\ref{app:toy-details}.

Figure~\ref{fig:toy-scatter-grid} shows that DIVI preserves the main geometry across the plotted targets, with sample clouds that closely follow the target contours in these visual comparisons.
The quantitative diagnostics in Appendix~\ref{app:toy-details} show that DIVI obtains the lowest mean on four of the six KL-style and Wasserstein metrics and the second-lowest on the remaining two. We also provide a comparison of the learned score against the true target distribution score in Appendix~\ref{app:target-score-discrepancy}, which demonstrates that DIVI achieves the lowest $L^2$ error on example X-shaped target.

\begin{figure}[t]
  \centering
  \includegraphics[width=\linewidth]{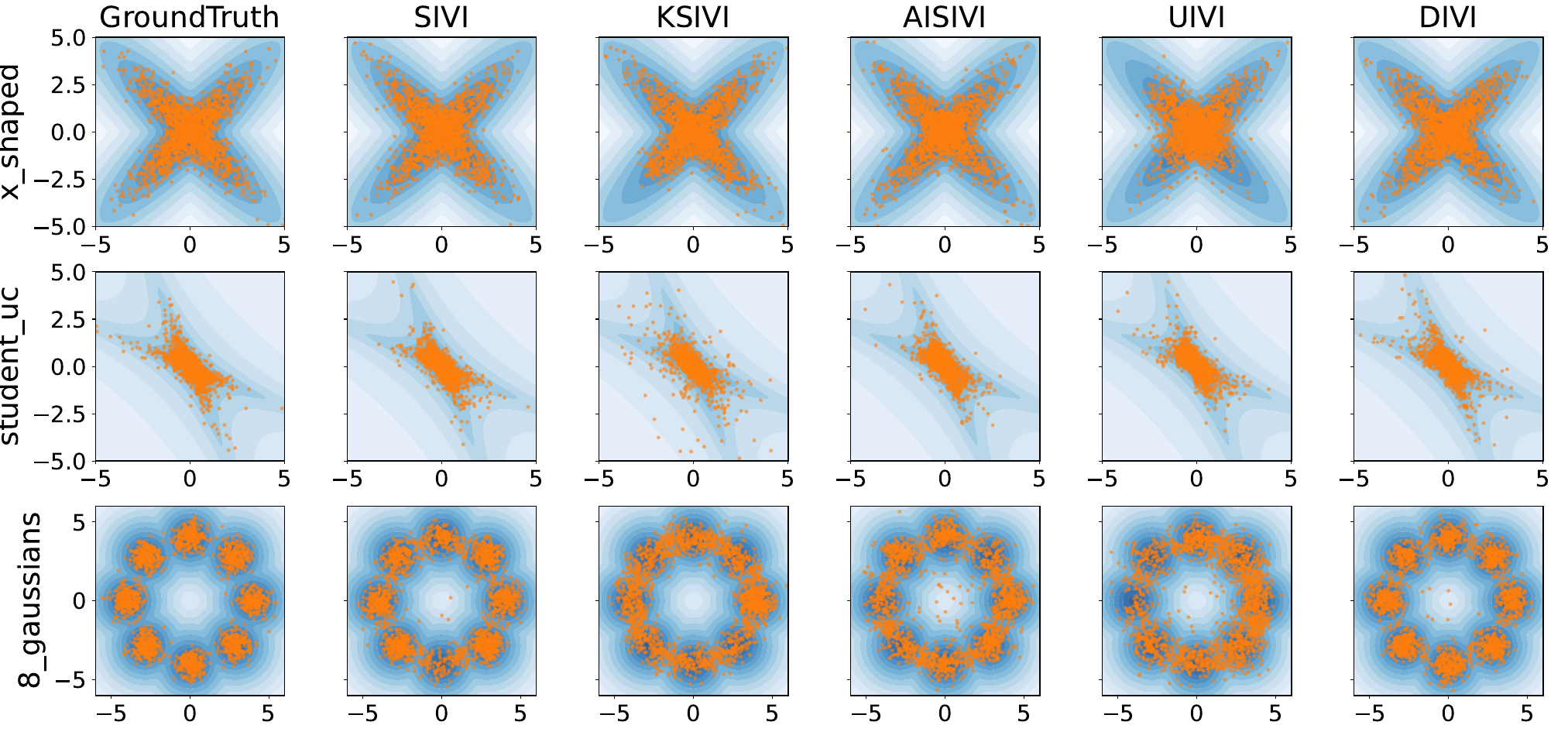}
  \caption{Qualitative comparison on two-dimensional targets.  Each panel is generated from 2,000 samples from the ground-truth baseline or a learned variational distribution; blue contours show the target density.}
  \label{fig:toy-scatter-grid}
\end{figure}

\subsection{Conditioned diffusion process}
\label{subsec:langevin-experiments}

We next evaluate a 100-dimensional conditioned diffusion posterior used in recent SIVI evaluations \citep{cheng2024kernel,pielok2025revisiting}.  The latent path follows the double-well Langevin SDE
\begin{equation}
  dx_t = 10x_t(1 - x_t^2)\,dt + dw_t,
  \qquad 0 \leq t \leq 1,
  \qquad x_0 = 0,
\end{equation}
where $w_t$ is one-dimensional Brownian motion.  We discretize the path with Euler-Maruyama step size $\Delta t=0.01$, giving
$
  x = (x_{\Delta t}, x_{2\Delta t}, \ldots, x_{100\Delta t}) \in \mathbb{R}^{100}.
$
The corresponding path prior is
\begin{equation}
  p_{\mathrm{prior}}(x)
  =
  \prod_{i=1}^{100}(2\pi\Delta t)^{-1/2}
  \exp\left\{
    -\frac{\left[x_{i\Delta t}-x_{(i-1)\Delta t}
      -10x_{(i-1)\Delta t}(1-x_{(i-1)\Delta t}^2)\Delta t\right]^2}
    {2\Delta t}
  \right\}.
\end{equation}
Observations are made every five time steps,
\begin{equation}
  y = (y_{5\Delta t}, y_{10\Delta t}, \ldots, y_{100\Delta t}),
  \qquad
  y_{5k\Delta t} \mid x_{5k\Delta t}
  \sim \mathcal{N}(x_{5k\Delta t}, \sigma^2),
\end{equation}
for $k=1,\ldots,20$ and $\sigma=0.1$.  The inference target is therefore $p(x \mid y) \propto p_{\mathrm{prior}}(x)p(y \mid x)$, with likelihood
\begin{equation}
  p(y \mid x)
  =
  \prod_{k=1}^{20}
  \frac{1}{\sqrt{2\pi\sigma^2}}
  \exp\left\{-\frac{(y_{5k\Delta t}-x_{5k\Delta t})^2}{2\sigma^2}\right\}.
\end{equation}
Posterior quality can be assessed through the mean path and the marginal uncertainty band. Following \citet{cheng2024kernel}, we approximate the posterior by running stochastic gradient Langevin dynamics (SGLD) with 1000 particles, 100,000 iterations and a step size of $10^{-4}$, and use the resulting samples as a high-quality reference for evaluation.
In the evaluation, we include SIVI and KSIVI as additional baselines. We train SIVI, UIVI, AISIVI, and DIVI for 10,000 main VI iterations with learning rate $10^{-3}$ and batch size 128; KSIVI is trained for 100,000 iterations with the same learning rate and batch size and a slower learning rate decay schedule. For DIVI, we train the proxy score MLP for 5 steps using 2048 samples and learning rate $10^{-3}$ after each update to the main VI model. Further method configurations are in Appendix~\ref{app:diffusion-details}.

\begin{figure}[t]
  \centering
  \includegraphics[width=0.9\linewidth]{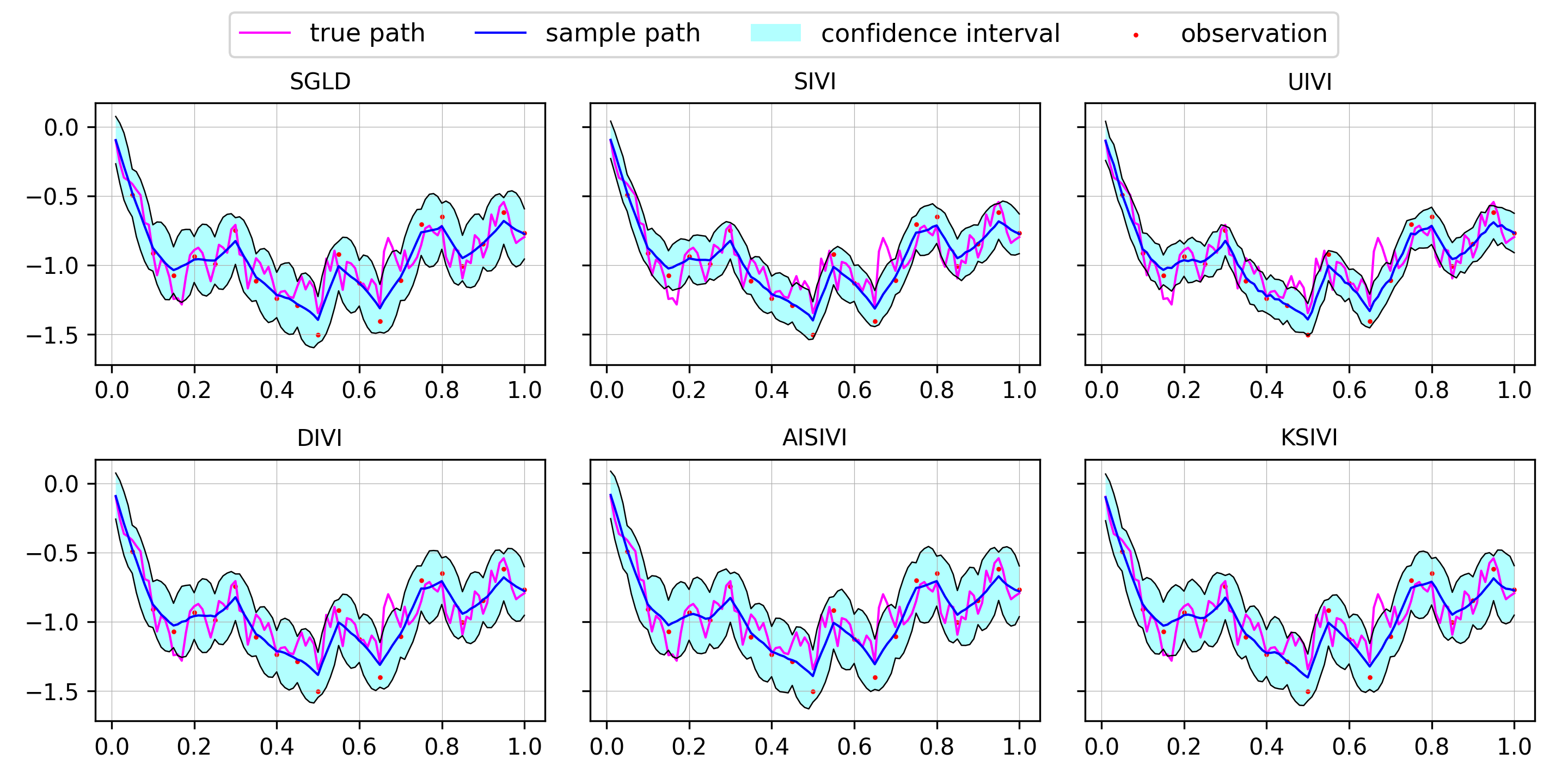}
  \vspace{-0.2em}
  \caption{Posterior path approximations on the conditioned diffusion benchmark.  The magenta curve is the true latent path, red points are observations, blue curves are posterior mean paths, and shaded regions are marginal 95\% confidence intervals.}
  \label{fig:langevin-trace-grid}
\end{figure}

\input{tables/langevin_metrics.tex}

Figure~\ref{fig:langevin-trace-grid} shows that DIVI tracks the latent trajectory while maintaining a posterior band comparable to the SGLD reference.  Following \citet{pielok2025revisiting}, Table~\ref{tab:langevin-final-eval} reports the expected log marginal likelihood of the 1,000 high-quality reference samples obtained from SGLD. For each method, we use 100,000 samples from the final trained variational model to approximate the marginal likelihood, and use the marginal likelihood estimated from 100,000 SGLD samples as the reference value. Larger values indicate better agreement with the reference posterior. We also report the wall-clock end-to-end training time for each method, including any warmup phases. As shown in Table~\ref{tab:langevin-final-eval}, DIVI obtains the highest expected log marginal likelihood among the VI methods and is closest to the SGLD reference, while also using the lowest wall-clock training time among the reported variational methods. Further metric and configuration details can be found in Appendix~\ref{app:diffusion-details}.

\subsection{Bayesian neural-network regression}
\label{subsec:bnn-experiments}

We finally evaluate posterior inference for Bayesian neural networks (BNN) on six UCI regression targets: Boston Housing, Concrete, Power Plant, Protein, Wine Red, and Yacht.  Following the BNN settings used for KSIVI and SIVI-SM \citep{cheng2024kernel,yu2023semi}, each target uses a single-hidden-layer ReLU network with 50 hidden units.  These targets provide a diverse high-dimensional suite, with posterior dimensionality ranging from 301 to 751.  Because these posteriors are data-dependent, we evaluate posterior predictive performance through test root mean square error (RMSE) and negative test log-likelihood (NLL), where lower values indicate better performance. We select the learning rate for each method on each target independently from $\{10^{-5}, 10^{-4}, 10^{-3}\}$ via cross-validation. We train all methods for 10,000 VI iterations except KSIVI, which, following the setting in \citet{cheng2024kernel}, is trained for 100,000 iterations. We train the proxy score MLP for DIVI for 2 steps using 2048 samples and learning rate $10^{-3}$ after each optimization step to the variational distribution. Dataset and evaluation details are in Appendix~\ref{app:bnn-details}.

\input{tables/bnn_rmse_nll.tex}

Table~\ref{tab:bnn-rmse-nll} gives the full predictive comparison across SIVI, UIVI, AISIVI, KSIVI, and DIVI.  DIVI is best or second-best on most reported RMSE and NLL entries, with particularly strong performance on Power and Protein.  The results also show cases where KSIVI remains stronger, such as Wine Red and Yacht RMSE.  Overall, DIVI remains competitive with the strongest baselines while requiring the lowest average wall-clock training time.

\subsection{Ablation studies}
\label{subsec:score-estimation}

First we ablate on the effect of the score estimators of each method. We evaluate SIVI, UIVI, AISIVI, and DIVI as score estimators on the same fixed variational distributions obtained after 2,000 to 10,000 DIVI updates on the X-shaped target. Table~\ref{tab:score-estimation-error} reports the squared $L^2$ discrepancy of the estimated score with respect to a reference score produced by UIVI-styled long HMC chains, together with the score-evaluation time of each method. DIVI achieves the lowest mean score-estimation error at every checkpoint and the lowest score-evaluation latency under the tested configurations. We may also observe that the score estimates of DIVI improves continually throughout the training process. More details of the evaluation are given in Appendix~\ref{app:score-estimation}. We also provide a comparison of the learned score against the true target distribution score along the trajectory of each method in Appendix~\ref{app:target-score-discrepancy}, which demonstrates that DIVI achieves the lowest $L^2$ error on all reported checkpoints on the example X-shaped target.

\begin{table}[!htbp]
\centering
\caption{Score-estimation error on fixed DIVI variational checkpoints for the X-shaped target (mean $\pm$ standard error over five seeds). Bold indicates the lowest mean.}
\label{tab:score-estimation-error}
\small
\setlength{\tabcolsep}{3pt}
\begin{adjustbox}{max width=\linewidth}
\begin{tabular}{lcccc}
\toprule
Checkpoint & SIVI & UIVI & AISIVI & DIVI \\
\midrule
2,000 steps & 154.381 {\footnotesize $\pm$ 68.264} & 1188.717 {\footnotesize $\pm$ 509.566} & 45.529 {\footnotesize $\pm$ 34.394} & \textbf{0.899} {\footnotesize $\boldsymbol{\pm}$ \textbf{0.406}} \\[2pt]
4,000 steps & 113.301 {\footnotesize $\pm$ 27.096} & 889.808 {\footnotesize $\pm$ 239.577} & 13.478 {\footnotesize $\pm$ 8.219} & \textbf{0.802} {\footnotesize $\boldsymbol{\pm}$ \textbf{0.388}} \\[2pt]
6,000 steps & 78.763 {\footnotesize $\pm$ 17.876} & 880.869 {\footnotesize $\pm$ 314.731} & 7.249 {\footnotesize $\pm$ 2.089} & \textbf{0.407} {\footnotesize $\boldsymbol{\pm}$ \textbf{0.211}} \\[2pt]
8,000 steps & 81.909 {\footnotesize $\pm$ 27.307} & 972.443 {\footnotesize $\pm$ 358.567} & 20.203 {\footnotesize $\pm$ 6.899} & \textbf{0.303} {\footnotesize $\boldsymbol{\pm}$ \textbf{0.137}} \\[2pt]
10,000 steps & 107.791 {\footnotesize $\pm$ 21.953} & 899.236 {\footnotesize $\pm$ 273.263} & 6.881 {\footnotesize $\pm$ 3.126} & \textbf{0.331} {\footnotesize $\boldsymbol{\pm}$ \textbf{0.245}} \\[2pt]
\midrule
Time (ms) & 5.316 {\footnotesize $\pm$ 0.082} & 204.523 {\footnotesize $\pm$ 10.375} & 8.096 {\footnotesize $\pm$ 0.099} & \textbf{0.134} {\footnotesize $\boldsymbol{\pm}$ \textbf{0.001}} \\
\bottomrule
\end{tabular}
\end{adjustbox}
\end{table}

Additionally, we ablate on the expressiveness of the semi-implicit variational distributional family, providing comparison with a RealNVP variational family in Appendix~\ref{app:flow-comparison}. We also perform ablations of DIVI's choice of score-update count in each variational optimization step and the reverse batch size in Appendix~\ref{app:reverse-updates}. We believe this would provide practical guidance for future users of DIVI, as the score-update count and reverse batch size are two important hyperparameters in the accuracy-computaion tradeoff of DIVI.

%% file: tables/langevin_metrics.tex
\begin{table}[t]
\centering
\caption{Conditioned diffusion results for expected log marginal likelihood and wall-clock time.}
\label{tab:langevin-final-eval}
\begin{tabular}{lccc}
\toprule
Method & ELM & wall-clock time (s) & iterations \\
\midrule
SGLD & 75.316 $\pm$ {\footnotesize 0.00} & -- & -- \\
\midrule
SIVI & 58.309 $\pm$ {\footnotesize 0.11} & 368 & 10K \\
UIVI & 21.866 $\pm$ {\footnotesize 0.23} & 756 & 10K \\
AISIVI & 74.687 $\pm$ {\footnotesize 0.03} & 736 & 10K \\
KSIVI & 74.808 $\pm$ {\footnotesize 0.04} & 948 & 100K \\
DIVI & \textbf{75.162} $\pm$ {\footnotesize 0.02} & \textbf{335} & 10K \\
\bottomrule
\end{tabular}
\end{table}

%% file: tables/bnn_rmse_nll.tex
\begin{table}[t]
\centering
\caption{BNN test performance and computational cost. RMSE and NLL are reported as mean $\pm$ standard error. Lower is better. \textbf{Bold} indicates the best result; \underline{underline} indicates the second best.}
\label{tab:bnn-rmse-nll}
\vspace{0.75em}
\small
\setlength{\tabcolsep}{4.5pt}
\renewcommand{\arraystretch}{1.08}
\begin{tabular}{llccccc}
\toprule
Dataset & Metric & SIVI & UIVI & AISIVI & KSIVI & DIVI \\
\midrule
\multirow{2}{*}{Boston} & RMSE & 2.826 $\pm$ {\footnotesize 0.06} & 3.266 $\pm$ {\footnotesize 0.11} & 3.294 $\pm$ {\footnotesize 0.10} & \textbf{2.661} $\pm$ {\footnotesize 0.02} & \underline{2.677} $\pm$ {\footnotesize 0.06} \\
 & NLL & 2.518 $\pm$ {\footnotesize 0.01} & 2.606 $\pm$ {\footnotesize 0.02} & 2.615 $\pm$ {\footnotesize 0.02} & \underline{2.490} $\pm$ {\footnotesize 0.00} & \textbf{2.477} $\pm$ {\footnotesize 0.01} \\
\addlinespace[2pt]
\multirow{2}{*}{Concrete} & RMSE & 6.932 $\pm$ {\footnotesize 0.09} & 7.668 $\pm$ {\footnotesize 0.16} & 8.243 $\pm$ {\footnotesize 0.28} & \underline{6.351} $\pm$ {\footnotesize 0.02} & \textbf{6.342} $\pm$ {\footnotesize 0.08} \\
 & NLL & 3.373 $\pm$ {\footnotesize 0.02} & 3.490 $\pm$ {\footnotesize 0.03} & 3.539 $\pm$ {\footnotesize 0.04} & \textbf{3.262} $\pm$ {\footnotesize 0.00} & \underline{3.283} $\pm$ {\footnotesize 0.02} \\
\addlinespace[2pt]
\multirow{2}{*}{Power} & RMSE & \underline{3.936} $\pm$ {\footnotesize 0.01} & 3.977 $\pm$ {\footnotesize 0.01} & 4.007 $\pm$ {\footnotesize 0.01} & 4.265 $\pm$ {\footnotesize 0.14} & \textbf{3.878} $\pm$ {\footnotesize 0.01} \\
 & NLL & \underline{2.802} $\pm$ {\footnotesize 0.00} & 2.813 $\pm$ {\footnotesize 0.00} & 2.823 $\pm$ {\footnotesize 0.00} & 2.931 $\pm$ {\footnotesize 0.06} & \textbf{2.781} $\pm$ {\footnotesize 0.00} \\
\addlinespace[2pt]
\multirow{2}{*}{Protein} & RMSE & \underline{4.892} $\pm$ {\footnotesize 0.03} & 5.021 $\pm$ {\footnotesize 0.00} & 4.934 $\pm$ {\footnotesize 0.02} & 4.967 $\pm$ {\footnotesize 0.00} & \textbf{4.493} $\pm$ {\footnotesize 0.00} \\
 & NLL & \underline{3.019} $\pm$ {\footnotesize 0.00} & 3.029 $\pm$ {\footnotesize 0.00} & 3.028 $\pm$ {\footnotesize 0.00} & 3.021 $\pm$ {\footnotesize 0.00} & \textbf{2.927} $\pm$ {\footnotesize 0.00} \\
\addlinespace[2pt]
\multirow{2}{*}{Wine Red} & RMSE & 0.623 $\pm$ {\footnotesize 0.01} & 0.627 $\pm$ {\footnotesize 0.00} & 0.625 $\pm$ {\footnotesize 0.01} & \textbf{0.604} $\pm$ {\footnotesize 0.00} & \underline{0.613} $\pm$ {\footnotesize 0.00} \\
 & NLL & 0.941 $\pm$ {\footnotesize 0.01} & 0.947 $\pm$ {\footnotesize 0.01} & 0.943 $\pm$ {\footnotesize 0.01} & \textbf{0.915} $\pm$ {\footnotesize 0.00} & \underline{0.930} $\pm$ {\footnotesize 0.01} \\
\addlinespace[2pt]
\multirow{2}{*}{Yacht} & RMSE & 1.911 $\pm$ {\footnotesize 0.12} & 2.720 $\pm$ {\footnotesize 0.10} & 2.539 $\pm$ {\footnotesize 0.10} & \textbf{1.274} $\pm$ {\footnotesize 0.07} & \underline{1.358} $\pm$ {\footnotesize 0.04} \\
 & NLL & 1.862 $\pm$ {\footnotesize 0.06} & 2.281 $\pm$ {\footnotesize 0.08} & 2.181 $\pm$ {\footnotesize 0.07} & \underline{1.595} $\pm$ {\footnotesize 0.04} & \textbf{1.557} $\pm$ {\footnotesize 0.06} \\
\midrule
\multicolumn{2}{l}{Avg. wall-clock time} & 487 & 842 & 704 & 1100 & \textbf{264} \\
\bottomrule
\end{tabular}
\end{table}

%% file: proof_appendix.tex
\section{Proofs of Propositions}
\label{app: proofs}
\subsection{Proof of Proposition 3.1}
\label{app:pf-dsm}
\begin{proof}
    Expanding the squared norm with $\nabla_z \log q_{\phi}(z|\epsilon) - \nabla_z\log q_{\phi}(z) = \nabla_z \log q_{\phi}(\epsilon|z)$:
$$
\begin{aligned}
\mathbb{E}_{z, \epsilon} \|s_{\psi}(z) - \nabla_z\log q_{\phi}(z|\epsilon)\|^2 
&= \mathbb{E}_{z, \epsilon} \|(s_{\psi}(z) - \nabla_z\log q_{\phi}(z)) - \nabla_z \log q_{\phi}(\epsilon|z)\|^2 \\
&= \underbrace{\mathbb{E}_{z} \|s_{\psi}(z) - \nabla_z\log q_{\phi}(z)\|^2}_{\mathcal{E}^2(\phi, \psi)} + \underbrace{\mathbb{E}_{z, \epsilon} \|\nabla_z \log q_{\phi}(\epsilon|z)\|^2}_{\mathbb{E}_z \mathcal{I}(q_{\phi}(\epsilon|z))} \\
&\quad - 2 \mathbb{E}_{z} \big{\langle} s_{\psi}(z) - \nabla_z \log q_{\phi}(z), \underbrace{\mathbb{E}_{q_{\phi}(\epsilon|z)} \nabla_z \log q_{\phi}(\epsilon|z)}_{\text{zero}} \big{\rangle} \\
&= \mathcal{E}^2(\phi, \psi) + \mathbb{E}_{z} \mathcal{I}(q_{\phi}(\epsilon|z))
\end{aligned}
$$

where the last equality holds under the standard regularity conditions
that allow differentiation under the integral sign. If the score model class is sufficiently rich, the population minimizer
satisfies \(s_{\psi^*}(z)=\nabla_z\log q_\phi(z)\) \(q_\phi\)-a.e.,
and hence \(\mathcal E(\phi,\psi^*)=0\).
\end{proof}

\subsection{Proof of Lemma 4.1}
\label{app:pf-err}
\begin{proof}

Under the reparameterization $z=\mu_{\phi}(\epsilon)+u\odot\sigma_{\phi}(\epsilon)$, $u\sim \mathcal{N}(0, I_{d}), \epsilon\in\mathbb{R}^{d_{\epsilon}},\phi\in\mathbb{R}^{d_{\phi}}$, the gradient $\nabla_{\phi}\mu_{\phi}(\epsilon)\in \mathbb{R}^{d\times d_{\phi}}, \nabla_{\phi}\sigma_{\phi}(\epsilon)\in\mathbb{R}^{d\times d_{\phi}}$. Throughout the proof, all vectors are treated as row vectors. 

First we give an upper bound for $\mathbb{E}\|u\|_{\infty}^4=\mathbb{E}\max_{1\leq i\leq d} |u_i|^4$.  For a standard Gaussian random variable, we use the tail bound
$
\mathbb P(|u_i|>y)\le 2e^{-y^2/2}, y>0.
$
Therefore, for any \(x>0\),
$
\mathbb P(u_i^4>x)
=
\mathbb P(|u_i|>x^{1/4})
\le
2e^{-\sqrt{x}/2}.
$ Hence,
$$
\mathbb{E}\|u\|_{\infty}^4
= \int_{0}^{\infty} \mathbb{P}\left(\max_{1 \le i \le d} u_i^4 > x\right) \mathrm{d}x 
\le \int_{0}^{\infty} \min\{1, \sum_{i=1}^{d} \mathbb{P}(u_i^4 > x)\} \mathrm{d}x \le \int_{0}^{\infty} \min\{1, 2d\cdot e^{-\sqrt{x}/2}\} \mathrm{d}x.
$$
Solving for $x_0$ such that $2d\cdot e^{-\sqrt{x}/2} = 1$ yields $x_0 = 4\log^2(2d)$. Splitting the integral at $x_0$, we obtain:
$$
\begin{aligned}
\int_{0}^{\infty} \min\{1, 2d\cdot e^{-\sqrt{x}/2}\} \mathrm{d}x
&= 4\log^2(2d) + 2d \int_{4\log^2(2d)}^{\infty} e^{-\sqrt{x}/2} \mathrm{d}x.\\
&=4\log^2(2d)+16d\int_{\log(2d)}^{\infty}t e^{-t}\mathrm{d}t\\
& =4\log^2(2d)+8\log(2d)+8.
\end{aligned}
$$

 So we arrive at the upper bound for the fourth moment $
    \mathbb{E}\|u\|_{\infty}^4 \le 4\log^2(2d) + 8\log(2d) + 8$. Based on this result, we can control the fourth order moment $\mathbb{E}\|\nabla_{\phi}h_{\phi}(u, \epsilon)\|_2^4$: 
$$\begin{aligned}
\mathbb{E}\|\nabla_{\phi}h_{\phi}(u, \epsilon)\|_2^4
		&= \mathbb{E}\|\nabla_{\phi}\mu_{\phi}(\epsilon) + \text{diag}(u)\nabla_{\phi}\sigma_{\phi}(\epsilon)\|_2^4\\ 
		&\leq 8\ \mathbb{E}  \left(\|\nabla_{\phi}\mu_{\phi}(\epsilon)\|_2^4 + \|u\|_{\infty}^4\cdot\|\nabla_{\phi}\sigma_{\phi}(\epsilon)\|_2^4 \right)\\
		&= 8\mathbb{E}_{\epsilon}\|\nabla_{\phi}\mu_{\phi}(\epsilon)\|_2^4 + 8\mathbb{E}_u\|u\|_{\infty}^4\cdot\mathbb{E}_{\epsilon}\|\nabla_{\phi}\sigma_{\phi}(\epsilon)\|_2^4   \\
		&\leq 8\mathbb{E}_{\epsilon}\|\nabla_{\phi}\mu_{\phi}(\epsilon)\|_2^4 + 8(4\log^2(2d) + 8\log(2d) + 8)\mathbb{E}_{\epsilon}\|\nabla_{\phi}\sigma_{\phi}(\epsilon)\|_2^4\\
		&\leq 8(4\log^2(2d) + 8\log(2d) + 9)C=\mathcal{O}(C(\log d)^2).
\end{aligned}$$

Here, $\|\cdot\|_2$ denotes the $2-$ operator norm for matrices, $\text{diag}(u)$ is the diagonal matrix with $\left(\text{diag}(u)\right)_{ii}=u_i$.
\end{proof}

\subsection{Bound Based on Linear Growth Condition}
\label{app:linear-growth}

We show that Assumption \ref{assump:bounded_score} can be derived from a
linear growth condition together with a bounded fourth moment condition on
the reparameterization functions.

Similar to the conditions of linear growth and fourth-moment boundedness in \citep{cheng2024kernel}, we assume that there exists a constant
\(\lambda>0\) such that
$$
\max\left\{
\|\nabla_z\log p(\mathcal D,z)\|,
\|s_\psi(z)\|
\right\}
\le
\lambda(\|z\|+1)
$$
holds uniformly for \(z\) and \(\psi\) and there exists a constant \(\zeta>0\) such that
$$
\max\left\{
\mathbb E_{\epsilon}\|\mu_\phi(\epsilon)\|^4,
\mathbb E_{\epsilon}\|\sigma_\phi(\epsilon)\|^4
\right\}
\le \zeta
$$
holds uniformly for \(\phi\).
Then Assumption \ref{assump:bounded_score} holds with
$$
M
=
8\lambda^4
\left[
8\zeta
\left(
4\log^2(2d)+8\log(2d)+9
\right)
+1\right]=\mathcal{O}(\zeta\lambda^4\log^2 d)
.
$$
\begin{proof}
It suffices to prove the bound for \(s_\psi(z)\); the bound for
\(\nabla_z\log p(\mathcal D,z)\) follows identically. By the linear growth
condition and \((a+b)^4\le 8(a^4+b^4)\),
\[
\begin{aligned}
\mathbb E_{q_\phi}\|s_\psi(z)\|^4
&\le
\lambda^4 \mathbb E_{q_\phi}(\|z\|+1)^4 \le
8\lambda^4
\left(
\mathbb E_{q_\phi}\|z\|^4+1
\right) \\
&=
8\lambda^4
\left(
\mathbb E_{u,\epsilon}
\|\mu_\phi(\epsilon)+u\odot\sigma_\phi(\epsilon)\|^4
+1
\right) \\
&\le
8\lambda^4
\left(
8\mathbb E_{\epsilon}\|\mu_\phi(\epsilon)\|^4
+
8\mathbb E_{u,\epsilon}\|u\odot\sigma_\phi(\epsilon)\|^4
+1
\right) \\
&\le
8\lambda^4
\left(
8\zeta
+
8\mathbb E_u\|u\|_\infty^4
\mathbb E_\epsilon\|\sigma_\phi(\epsilon)\|^4
+1
\right) \\
&\le
8\lambda^4
\left[
8\zeta
+
8\zeta
\left(
4\log^2(2d)+8\log(2d)+8
\right)
+1
\right] \\
&=
8\lambda^4
\left[
8\zeta
\left(
4\log^2(2d)+8\log(2d)+9
\right)
+1
\right].
\end{aligned}
\]
Therefore, $
\mathbb E_{q_\phi}\|s_\psi(z)\|^4\le M,
$
where
$
M
:=
8\lambda^4
\left[
8\zeta
\left(
4\log^2(2d)+8\log(2d)+9
\right)
+1
\right].
$
The same argument gives
$
\mathbb E_{q_\phi}
\|\nabla_z\log p(\mathcal D,z)\|^4
\le M.
$ We also provide numerical validations on the reasonable bound of $\zeta$ and $\lambda$, which is discussed concisely in Appendix \ref{app: numerical-experiments} 
\end{proof}
\subsection{Proof of Proposition 4.2}\label{app:error}
\begin{proof}
Based on the preparations in Appendix \ref{app:pf-err}, we can respectively bound the bias and variance of the surrogate gradient.

First, we fix \((\phi,\psi)\) and take all expectations with respect to
the fresh sampling variables \((u,\epsilon)\), with \(z=h_\phi(u,\epsilon)\).
Equivalently, expectations of functions depending only on \(z\) are taken
with respect to \(z\sim q_\phi\).
We concentrate on two estimators only using a single sample $z$:
\begin{equation}
    \label{unbiased}
    {g}(z;\phi)=\left(\nabla_z\log p(\mathcal{D}, z)-s_{\phi}(z)\right)\nabla_{\phi}h_{\phi}(u, \epsilon)
\end{equation}
\begin{equation}\label{biased}
    \hat{g}(z;\phi, \psi)=\left(\nabla_z\log p(\mathcal{D}, z)-s_{\psi}(z)\right)\nabla_{\phi}h_{\phi}(u, \epsilon)
\end{equation}
Since
$
\mathbb E_{u,\epsilon} g(z;\phi)
=
\nabla_\phi \mathcal L(\phi)
$, \eqref{unbiased} gives an unbiased estimator of
\(\nabla_\phi \mathcal L(\phi)\).
However, the semi-implicit score function
\(s_\phi(z)=\nabla_z\log q_\phi(z)\)
is intractable. Therefore, \eqref{biased} gives a biased estimator of $\nabla_{\phi}\mathcal{L}(\phi)$ with bias-variance decomposition:
\begin{equation}\label{equ:decomposition}
\hat{g}(z;\phi ,\psi)=\nabla_{\phi}\mathcal{L}(\phi)+b(\phi, \psi)+\xi(z;\phi, \psi)
\end{equation}
where $b(\phi, \psi)=\mathbb{E}\hat{g}(z;\phi ,\psi)-\nabla_{\phi}\mathcal{L}(\phi)$, $\xi(z;\phi, \psi)=\hat{g}(z;\phi ,\psi)-\nabla_{\phi}\mathcal{L}(\phi)-b(\phi, \psi)$. Then $\mathbb{E}\xi(z;\phi, \psi)=0.$ 

By Lemma \ref{lem:jacobian_moment} and Cauchy-Schwarz inequality, we can control the bias norm $\|b(\phi, \psi)\|$ by the score matching error $\mathcal{E}(\phi, \psi)$:
\begin{align*}
\|b(\phi, \psi)\|^2
&=
\left\|
\mathbb{E}_{u,\epsilon}
\left[
\hat{g}(z;\phi, \psi)-g(z;\phi)
\right]
\right\|^2 \\
&\leq
\left(
\mathbb{E}_{u,\epsilon}
\left\|
\hat{g}(z;\phi, \psi)-g(z;\phi)
\right\|
\right)^2 \\
&=
\left(
\mathbb{E}_{u,\epsilon}
\left\|
\left(s_{\phi}(z)-s_{\psi}(z)\right)
\nabla_{\phi}h_{\phi}(u,\epsilon)
\right\|
\right)^2,\\
&\leq
\left(
\mathbb{E}_{u,\epsilon}
\left[
\|s_{\phi}(z)-s_{\psi}(z)\|
\cdot
\|\nabla_{\phi}h_{\phi}(u,\epsilon)\|_2
\right]
\right)^2 \\
&\leq
\mathbb{E}_{z\sim q_\phi}
\|s_{\phi}(z)-s_{\psi}(z)\|^2
\cdot
\mathbb{E}_{u,\epsilon}
\|\nabla_{\phi}h_{\phi}(u,\epsilon)\|_2^2 \\
&\leq
\mathbb{E}_{z\sim q_\phi}
\|s_{\phi}(z)-s_{\psi}(z)\|^2
\cdot
\sqrt{
\mathbb{E}_{u,\epsilon}
\|\nabla_{\phi}h_{\phi}(u,\epsilon)\|_2^4
} \\
&\leq
\sqrt{C_h
}
\cdot
\mathcal{E}^2(\phi,\psi) =
J\cdot\mathcal{E}^2(\phi,\psi).
\end{align*}

Next, $\mathbb{E}\|\xi\|^2$ could be controlled similarly. 
\begin{align*}
\mathbb{E}_{u,\epsilon}\|\xi\|^2
&=
\mathbb{E}_{u,\epsilon}
\left\|
\hat{g}(z;\phi,\psi)
-
\mathbb{E}_{u,\epsilon}\hat{g}(z;\phi,\psi)
\right\|^2 \\
&\leq
\mathbb{E}_{u,\epsilon}
\|\hat{g}(z;\phi,\psi)\|^2 \\
&=
\mathbb{E}_{u,\epsilon}
\left\|
\left(\nabla_z\log p(\mathcal{D}, z)-s_{\psi}(z)\right)
\nabla_{\phi}h_{\phi}(u,\epsilon)
\right\|^2 \\
&\leq
\mathbb{E}_{u,\epsilon}
\left[
\|\nabla_z\log p(\mathcal{D}, z)-s_{\psi}(z)\|^2
\cdot
\|\nabla_{\phi}h_{\phi}(u,\epsilon)\|_2^2
\right] \\
&\leq
\sqrt{
\mathbb{E}_{z\sim q_\phi}
\|\nabla_z\log p(\mathcal{D}, z)-s_{\psi}(z)\|^4
}
\cdot
\sqrt{
\mathbb{E}_{u,\epsilon}
\|\nabla_{\phi}h_{\phi}(u,\epsilon)\|_2^4
} \\
&\leq \sqrt{8
(\mathbb{E}_{z\sim q_\phi}
\|\nabla_z\log p(\mathcal{D}, z)\|^4+\mathbb{E}_{z\sim q_{\phi}}\|s_{\psi}(z)\|^4)
}\cdot
\sqrt{
\mathbb{E}_{u,\epsilon}
\|\nabla_{\phi}h_{\phi}(u,\epsilon)\|_2^4
}\\
&\leq
4\sqrt{C_h\cdot M}=\Sigma^2.
\end{align*}

We now relate these conditional quantities to the preceding single-sample
bounds. The bounds derived above were fixed-parameter bounds: for any fixed
\((\phi,\psi)\), with fresh samples \(z=h_\phi(u,\epsilon)\), we have
$
\|b(\phi,\psi)\|^2
\le
J\mathcal E^2(\phi,\psi),
\quad
\mathbb E\|\xi(z;\phi,\psi)\|^2
\le
\Sigma^2.
$
Conditioning on \(\mathcal G_n\), the iterates \(\phi_n\) and \(\psi_n\) are
fixed, and the only remaining randomness comes from the fresh mini-batch.
Therefore, the preceding fixed-parameter bias bound can be applied with
\((\phi,\psi)=(\phi_n,\psi_n)\), yielding
$
\|b_n\|^2
\le
J\mathcal E_n^2.
$

It remains to justify the mini-batch variance bound. Define the single-sample
surrogate gradient at iteration \(n\) by
$
g_n^{(i)}
=
\left(
\nabla_z\log p(\mathcal{D},z_n^{(i)})
-
s_{\psi_n}(z_n^{(i)})
\right)
\nabla_{\phi}h_{\phi_n}(u_n^{(i)},\epsilon_n^{(i)}),
$
Then $
\hat g_n
=
\frac{1}{N}\sum_{i=1}^N g_n^{(i)},
$
and $
\xi_n
=
\frac{1}{N}\sum_{i=1}^N
\left(
g_n^{(i)}
-
\mathbb E[g_n^{(i)}\mid \mathcal G_n]
\right).
$
Conditional on \(\mathcal G_n\), the variables
\(g_n^{(1)},\ldots,g_n^{(N)}\) are i.i.d. Hence the cross terms vanish, and
\begin{align*}
\mathbb E[\|\xi_n\|^2\mid \mathcal G_n]
&=
\frac{1}{N}
\mathbb E
\left[
\left\|
g_n^{(1)}
-
\mathbb E[g_n^{(1)}\mid \mathcal G_n]
\right\|^2
\mid
\mathcal G_n
\right] 
\le
\frac{1}{N}
\mathbb E
\left[
\|g_n^{(1)}\|^2
\mid
\mathcal G_n
\right]
\le
\frac{\Sigma^2}{N}.
\end{align*}

This completes the proof.
\end{proof}

\subsection{Proof of Lemma 4.3}
\label{app:pf-ascent}

\begin{proof}
Recall that the update rule is
$\phi_{n+1}=\phi_n+\eta \hat g_n.$
By the \(L\)-smoothness of \(\mathcal L(\phi)\), we have
\begin{align*}
\mathcal L(\phi_{n+1})
&\geq
\mathcal L(\phi_n)
+
\left\langle
\phi_{n+1}-\phi_n,
\nabla_\phi \mathcal L(\phi_n)
\right\rangle
-
\frac{L}{2}
\|\phi_{n+1}-\phi_n\|^2  \\
&=
\mathcal L(\phi_n)
+
\eta
\left\langle
\nabla_\phi \mathcal L(\phi_n),
\hat g_n
\right\rangle
-
\frac{\eta^2L}{2}
\|\hat g_n\|^2 .
\end{align*}
By Proposition \ref{prop:error}, we have
 the decomposition $
\hat g_n
=
\nabla_\phi\mathcal L(\phi_n)+b_n+\xi_n,
\quad
\mathbb E[\xi_n\mid\mathcal G_n]=0,
$ 
with
$
\|b_n\|^2\leq J\mathcal E_n^2,\quad
\mathbb E[\|\xi_n\|^2\mid\mathcal G_n]\leq \frac{\Sigma^2}{N}.
$
Taking conditional expectation with respect to \(\mathcal G_n\), we obtain
\begin{align*}
\mathbb E[
\mathcal L(\phi_{n+1})
\mid
\mathcal G_n
]
&\geq
\mathcal L(\phi_n)
+
\eta
\left\langle
\nabla_\phi\mathcal L(\phi_n),
\nabla_\phi\mathcal L(\phi_n)+b_n
\right\rangle 
-
\frac{\eta^2L}{2}
\mathbb E
\left[
\left\|
\nabla_\phi\mathcal L(\phi_n)+b_n+\xi_n
\right\|^2
\mid
\mathcal G_n
\right] \\
&=
\mathcal L(\phi_n)
+
\eta
\|\nabla_\phi\mathcal L(\phi_n)\|^2
+
\eta
\left\langle
\nabla_\phi\mathcal L(\phi_n),b_n
\right\rangle 
-
\frac{\eta^2L}{2}
\left\|
\nabla_\phi\mathcal L(\phi_n)+b_n
\right\|^2
\\ &\quad- 
\frac{\eta^2L}{2}
\mathbb E[\|\xi_n\|^2\mid\mathcal G_n] \\
&\geq
\mathcal L(\phi_n)
+
\eta
\|\nabla_\phi\mathcal L(\phi_n)\|^2
-
\frac{\eta}{2}
\left(
\|\nabla_\phi\mathcal L(\phi_n)\|^2+\|b_n\|^2
\right) 
\\
&\quad
-\eta^2L
\left(
\|\nabla_\phi\mathcal L(\phi_n)\|^2+\|b_n\|^2
\right)
-
\frac{\eta^2L}{2}
\mathbb E[\|\xi_n\|^2\mid\mathcal G_n] \\
&=
\mathcal L(\phi_n)
+
\frac{\eta}{2}(1-2\eta L)
\|\nabla_\phi\mathcal L(\phi_n)\|^2
-\frac{\eta}{2}(1+2\eta L)
\|b_n\|^2-\frac{\eta^2L}{2}
\mathbb E[\|\xi_n\|^2\mid\mathcal G_n] \\
&\geq
\mathcal L(\phi_n)
+
\frac{\eta}{2}(1-2\eta L)
\|\nabla_\phi\mathcal L(\phi_n)\|^2
-
\frac{\eta J}{2}(1+2\eta L)\mathcal E_n^2
-
\frac{\eta^2L\Sigma^2}{2N}.
\end{align*}
In the last inequality, we used
$
\langle a,b\rangle
\geq
-\frac{1}{2}\|a\|^2-\frac{1}{2}\|b\|^2$,
$\|a+b\|^2\leq 2\|a\|^2+2\|b\|^2.
$

Under the stepsize condition \(\eta\leq 1/(4L)\), we have $
\frac{\eta}{2}(1-2\eta L)\geq \frac{\eta}{4},
\quad 
\frac{\eta J}{2}(1+2\eta L)\leq \frac{3\eta J}{4}.
$ Therefore,
\[
\mathbb E[
\mathcal L(\phi_{n+1})
\mid
\mathcal G_n
]
\geq
\mathcal L(\phi_n)
+
\frac{\eta}{4}
\|\nabla_\phi\mathcal L(\phi_n)\|^2
-
\frac{\eta^2L\Sigma^2}{2N}
-
\frac{3\eta J}{4}
\mathcal E_n^2.
\]
This completes the proof.
\end{proof}

\subsection{Proof of Theorem 4.4}
\label{app:pf-convergence}

\begin{proof}
Taking total expectation on both sides of Lemma \ref{lem:ascent}, we obtain
\[
\mathbb{E}\mathcal{L}(\phi_{n+1})
\geq
\mathbb{E}\mathcal{L}(\phi_n)
+
\frac{\eta}{4}
\mathbb{E}\|\nabla_{\phi}\mathcal{L}(\phi_n)\|^2
-
\frac{\eta^2L\Sigma^2}{2N}
-
\frac{3\eta J}{4}
\mathbb{E}\mathcal E_n^2.
\]
Summing from \(n=0\) to \(T-1\), we get
\[
\mathbb{E}\mathcal{L}(\phi_T)
\geq
\mathcal{L}(\phi_0)
+
\frac{\eta}{4}
\sum_{n=0}^{T-1}
\mathbb{E}\|\nabla_{\phi}\mathcal{L}(\phi_n)\|^2
-
\frac{\eta^2L\Sigma^2 T}{2N}
-
\frac{3\eta J}{4}
\sum_{n=0}^{T-1}
\mathbb E\mathcal E_n^2.
\]
Rearranging and using \(\mathbb{E}\mathcal{L}(\phi_T)\leq \mathcal{L}^*\),
\[
\frac{\eta}{4}
\sum_{n=0}^{T-1}
\mathbb{E}\|\nabla_{\phi}\mathcal{L}(\phi_n)\|^2
\leq
\Delta_0
+
\frac{\eta^2L\Sigma^2 T}{2N}
+
\frac{3\eta J}{4}
\sum_{n=0}^{T-1}
\mathbb E\mathcal E_n^2.
\]
Dividing both sides by \(\eta T/4\), we obtain
\begin{align}
\frac{1}{T}
\sum_{n=0}^{T-1}
\mathbb{E}
\|\nabla_{\phi}\mathcal{L}(\phi_n)\|^2
&\leq
\frac{4(\Delta_0)}{\eta T}
+
\frac{2\eta L\Sigma^2}{N}
+
3J
\left(
\frac{1}{T}
\sum_{n=0}^{T-1}
\mathbb E\mathcal E_n^2
\right) \\
&=
\mathcal{O}
\left(
\frac{1}{\eta T}
+
\frac{\eta}{N}
+
\frac{1}{T}\sum_{n=0}^{T-1}\mathbb{E}\mathcal{E}_n^2
\right).
\end{align}

The right-hand side consists of three terms. The first term is the finite-iteration optimization error, the second term comes from the Monte Carlo variance of the mini-batch estimator, and the third term reflects the accumulated bias caused by the imperfect score estimator \(s_{\psi_n}\). Balancing the first two terms suggests the choice
$\eta
\asymp
\min\left\{
\frac{1}{4L},
\sqrt{
\frac{N(\Delta_0)}{L\Sigma^2T}
}
\right\}$.
With the balancing stepsize, when it is admissible, the stochastic optimization part satisfies
$
\frac{4\Delta_0}{\eta T}+\frac{2\eta L\Sigma^2}{N}
=
\mathcal O\left(\Sigma\sqrt{\frac{L\Delta_0}{NT}}\right).
$ Therefore, if \(N\) is fixed and \(L,\Delta_0,\Sigma\) are treated as constants, the stochastic optimization error is of order \(\mathcal O(T^{-1/2})\).

\end{proof}

%% file: experiment_appendix.tex
\section{Experimental Details}
\label{app:experiment-details}

\subsection{Shared setup}
\label{app:shared-setup}

\paragraph{Variational family.}
All SIVI variants use a conditional Gaussian semi-implicit variational family.  We write the mixing variable as $\epsilon$ and the latent variable as $z$, with
\begin{equation}
  q_\phi(z)
  =
  \int q_\phi(z \mid \epsilon) q(\epsilon)\,d\epsilon,
  \qquad
  q_\phi(z \mid \epsilon)
  =
  \mathcal{N}\!\left(z;\mu_\phi(\epsilon),\Sigma_\phi(\epsilon)\right).
\end{equation}
For two-dimensional targets, the mixing prior is standard Gaussian, and $\mu_\phi$ and the diagonal variance are produced by a two-hidden-layer MLP with hidden width 128. For the conditioned diffusion task, $\epsilon \in [0,1]^{100}$ uses a uniform prior, $\mu_\phi$ is a two-hidden-layer ReLU MLP with hidden width 128, and the conditional covariance is a learned global diagonal variance initialized near $0.01725$.  For BNN regression, $\epsilon \in \mathbb{R}^{3}$ uses a standard Gaussian prior, $\mu_\phi$ and the diagonal log variance are produced by a two-hidden-layer ReLU MLP with hidden width 10, and the log variance is initialized at $-8$ with a lower clamp at $-20$.  These BNN choices follow the compact semi-implicit architecture used in the KSIVI and SIVI-SM comparisons.

\paragraph{Common training protocol.}
Unless otherwise noted, quantitative results use ten random seeds and report the mean and standard error. The supplementary score-error comparisons, reverse-update ablations, and RealNVP comparison use five seeds and report standard errors. For $n$ independent runs, the standard error is $s/\sqrt{n}$, where $s$ is the sample standard deviation across runs. Unless otherwise noted, SIVI, UIVI, AISIVI, and DIVI use 10,000 main VI iterations with batch size 128, Adam optimization, and a step learning-rate scheduler. Unless otherwise noted, all methods are trained with annealing enabled for half of the total training iterations. All experiments are conducted on a Linux server with Nvidia RTX 4090 GPUs and Intel Xeon Gold 6430 @ 2.10GHz.

\paragraph{SIVI.}
SIVI approximates the semi-implicit marginal $\log q_\phi(z)$ using Monte Carlo averages over auxiliary $\epsilon$ samples.  We draw 4,096 auxiliary samples on diffusion targets and 2,048 on BNN targets due to GPU memory constraints.

\paragraph{UIVI.}
UIVI samples from the reverse conditional $q_\phi(\epsilon \mid z)$ via Hamiltonian Monte Carlo.  After 5 burn-in transitions, a single HMC trajectory of 5 leapfrog steps provides $K=5$ auxiliary samples, one per intermediate position, following \citet{titsias2019unbiased}.  The leapfrog step size is 0.2.

\subsection{Metrics}
\label{app:metric-details}

\paragraph{KL-style ELBO.}
For toy and diffusion targets, we report a KL-style objective,
\begin{equation}
  D_{\mathrm{KL}}
  =
  -\widehat{\mathcal{L}}(q_\phi)
  =
  -\frac{1}{N}
  \sum_{i=1}^{N}
  \left[
    \log p(z_i) - \widehat{\log q_\phi(z_i)}
  \right],
  \qquad
  z_i \sim q_\phi .
\end{equation}
The estimate uses $N=5{,}000$ VI samples.  For each $z_i$, $\widehat{\log q_\phi(z_i)}$ is approximated with 20 batches of 2,048 auxiliary $\epsilon$ samples, plus the generating $\epsilon$ sample. We utilize the batching logic in \cite{pielok2025revisiting} to increase the effective number of Monte Carlo auxiliary samples to 40960.

\paragraph{Sliced Wasserstein-2.}
Sliced Wasserstein-2 is computed by averaging one-dimensional Wasserstein distances over random unit projections,
\begin{equation}
  \mathrm{SW}_2^2(q,r)
  =
  \mathbb{E}_{\theta \sim \mathrm{Unif}(\mathbb{S}^{d-1})}
  \left[
    W_2^2\!\left(\theta^\top q,\theta^\top r\right)
  \right],
\end{equation}
estimated using 10,000 samples and 5,000 random projections.  For heavy-tailed distributions, we report a truncated variant following \cite{li2023samplingmollifiedinteractionenergy}: both variational and reference samples are retained only if every coordinate has absolute value less than a threshold, the \textit{edge length}, and sliced Wasserstein-2 is then computed on the accepted samples.  This provides a stable central-region transport diagnostic when tail draws dominate finite-sample Wasserstein estimates.  We use 6 as the edge length for \texttt{8\_gaussians} and edge length 8 for \texttt{student\_uc}. 

\paragraph{Expected log marginal (ELM).}
For the conditioned diffusion benchmark, we report the expected log marginal likelihood,
\begin{equation}
  \mathrm{ELM}
  =
  \mathbb{E}_{z \sim r}
  \left[
    \sum_{j=1}^{d}
    \log \widehat{q}_{\phi,j}(z_j)
  \right],
\end{equation}
where $r$ is the empirical distribution from the SGLD reference samples and $\widehat{q}_{\phi,j}$ is a one-dimensional Gaussian KDE fit to coordinate $j$ of model samples.  We follow \citet{pielok2025revisiting} in using this metric.  Direct Monte Carlo approximation of the full semi-implicit marginal likelihood is difficult because it integrates over a 100-dimensional mixing distribution, so the coordinate-wise KDE estimate provides a stable diagnostic, though it does not capture the full joint marginal likelihood. The reported evaluation uses 1,000 reference samples and 100,000 model samples; the SGLD baseline uses 100,000 SGLD samples as the model sample set.

\paragraph{BNN predictive metrics.}
For BNN targets, evaluation draws 2,000 posterior samples (network weights) from the learned variational distribution.  RMSE is computed from the posterior predictive mean on the test set.  Test log-likelihood is estimated by Monte Carlo marginalization over posterior samples, and the reported NLL is its negative.

\subsection{Two-dimensional targets}
\label{app:toy-details}

The main text displays and summarizes \texttt{x\_shaped}, \texttt{student\_uc}, and \texttt{8\_gaussians}. Visual panels use 2,000 samples per method and overlay the samples on target-density contours.

\paragraph{Target density definitions.}
We now state the precise density for each of the three two-dimensional synthetic targets used in Section~\ref{subsec:toy-experiments}.
All targets have $z\in\mathbb{R}^2$.

\subparagraph{X-shaped.}
The \texttt{x\_shaped} target is an equal-weight mixture of two zero-mean Gaussians whose covariance ellipses cross at the origin, producing an X-shaped density:
\begin{equation}\label{eq:x-shaped}
  p(z) = \tfrac{1}{2}\,\mathcal{N}(z;\,0,\,\Sigma_1)
        + \tfrac{1}{2}\,\mathcal{N}(z;\,0,\,\Sigma_2),
\end{equation}
where
\[
  \Sigma_1 = \begin{pmatrix} 2 & 1.8 \\ 1.8 & 2 \end{pmatrix},
  \qquad
  \Sigma_2 = \begin{pmatrix} 2 & -1.8 \\ -1.8 & 2 \end{pmatrix}.
\]
The high off-diagonal correlation ($\rho = \pm 0.9$) in each component creates two narrow ridges that intersect at the origin.  Approximating both ridges simultaneously requires the variational family to represent sharp, non-convex geometry.

\subparagraph{Student-t (uncorrelated marginals).}
The \texttt{student\_uc} target is a bivariate Student-$t$ distribution with $\nu=2$ degrees of freedom and a fixed scale matrix $A\in\mathbb{R}^{2\times 2}$ with $\det(A)=1$.  Its density is defined through the transformation $\tilde{z} = A^{-1}z$ with independent univariate Student-$t$ marginals:
\begin{equation}\label{eq:student-uc}
  p(z) \;=\; |\!\det A|^{-1}\,\prod_{i=1}^{2}
    \frac{\Gamma\!\bigl(\tfrac{\nu+1}{2}\bigr)}
         {\sqrt{\nu\pi}\;\Gamma\!\bigl(\tfrac{\nu}{2}\bigr)}
    \left(1 + \frac{\tilde{z}_i^{\,2}}{\nu}\right)^{\!-(\nu+1)/2},
  \qquad \nu = 2.
\end{equation}
The matrix $A$ is generated deterministically (seed~50) as $A = (B B^\top)^{-1}/\det\bigl((B B^\top)^{-1}\bigr)$, where $B_{ij}\sim\mathrm{Uniform}(-1,1)$.  The resulting density has polynomial tails ($\propto\|z\|^{-3}$ marginally), which stress-tests whether the variational approximation can allocate sufficient probability mass far from the mode.

\subparagraph{Eight Gaussians.}
The \texttt{8\_gaussians} target is an equally weighted mixture of eight isotropic Gaussians arranged uniformly on a circle of radius~$r$:
\begin{equation}\label{eq:8-gaussians}
  p(z) = \frac{1}{8}\sum_{k=1}^{8}
    \mathcal{N}\!\bigl(z;\,\mu_k,\,\sigma^2 I_2\bigr),
  \qquad
  \mu_k = r\bigl(\cos(k\pi/4),\;\sin(k\pi/4)\bigr),
\end{equation}
with $r=4$ and $\sigma=0.5$.  The well-separated modes (inter-mode distance $\approx 3.1$, versus component standard deviation $0.5$) make this a challenging multi-modal target: the variational family must discover and cover all eight modes without spurious bridging density between them.

\bigskip

Table~\ref{tab:toy-target-summary} collects the key parameters of each target.

\begin{table}[h]
\centering
\caption{Summary of two-dimensional synthetic target densities.}
\label{tab:toy-target-summary}
\small
\begin{tabular}{@{}llll@{}}
\toprule
\textbf{Target} & \textbf{Density form} & \textbf{Parameters} & \textbf{Key challenge} \\
\midrule
\texttt{x\_shaped}
  & $\frac{1}{2}\mathcal{N}(0,\Sigma_1) + \frac{1}{2}\mathcal{N}(0,\Sigma_2)$
  & $\Sigma_{1,2} = \bigl(\begin{smallmatrix}2 & \pm1.8\\ \pm1.8 & 2\end{smallmatrix}\bigr)$
  & Intersecting ridges \\[6pt]
\texttt{student\_uc}
  & $|A|^{-1}\prod_{i=1}^{2} t_\nu(\tilde{z}_i)$, \; $\tilde{z}=A^{-1}z$
  & $\nu=2$, \; $\det(A)=1$
  & Heavy polynomial tails \\[6pt]
\texttt{8\_gaussians}
  & $\frac{1}{8}\sum_{k=1}^{8}\mathcal{N}(\mu_k, \sigma^2 I)$
  & $r=4$, \; $\sigma=0.5$
  & Multi-modal coverage \\
\bottomrule
\end{tabular}
\end{table}

\paragraph{Training configuration details.}
SIVI, UIVI, AISIVI, and DIVI use 10,000 iterations, VI learning rate $10^{-3}$, batch size 128, and a step scheduler with decay factor 0.7 every 1,000 iterations (0.9 on \texttt{8\_gaussians}). Linear annealing over 5,000 steps is applied on all targets except \texttt{student\_uc}, where annealing is disabled for these methods.

\subparagraph{SIVI.}
SIVI uses 4,096 auxiliary samples from the mixing prior to approximate the semi-implicit marginal density.

\subparagraph{AISIVI.}
AISIVI uses a conditional RealNVP reverse model with 8 coupling layers and hidden width 12.  Before joint training, the flow is pre-trained on initial VI samples for 500 warmup epochs with learning rate $10^{-2}$ and batch size 8,192.  During main training, the flow uses 1,024 reverse samples, reverse learning rate $10^{-3}$, and one reverse update per VI step.

\subparagraph{DIVI.}
The proxy score MLP has hidden width 64 and three hidden layers.  The proxy is warm-started for 500 epochs with learning rate $10^{-3}$ and batch size 8,192.  During main training the proxy uses reverse batch size 2,048, reverse learning rate $10^{-3}$, and 5 updates per VI step.

\subparagraph{KSIVI.}
KSIVI is trained for 50,000 iterations with batch size 128, learning rate $10^{-3}$, 25,000-step linear annealing, and a step-decay schedule with factor 0.9 every 1,000 iterations, following \citet{cheng2024kernel}. We use a Gaussian kernel on \texttt{x\_shaped} and \texttt{8\_gaussians}, and an affine-invariant Riesz kernel on \texttt{student\_uc}. On \texttt{x\_shaped}, the conditional variances are initialized to the constant $1.313$; the mean network uses learning rate $10^{-3}$ and the variance head uses $10^{-5}$, with the same decay schedule. On \texttt{student\_uc}, the median kernel bandwidth is detached during backpropagation, and we apply log probability regularization with coefficient $0.05$ during annealing following \citet{cheng2024kernel}.

\paragraph{Quantitative results.}
Table~\ref{tab:toy-method-grid} presents the quantitative results across all five methods. DIVI obtains the lowest mean on four of the six quality metrics and the second-lowest on the remaining two, while SIVI is strongest on the Student UC and eight-Gaussian KL-style metrics. DIVI also has the lowest mean end-to-end wall-clock time. The Wasserstein-2 evaluation follows the protocol in Appendix~\ref{app:metric-details}. We are observing divergence on \texttt{8_gaussians} for AISIVI, as a result of drifting support. The same divergence behavior has been observed in the official code of AISIVI, indicating a potential weakness in training stability.

\input{tables/toy_method_grid.tex}

\subsection{Conditioned diffusion process}
\label{app:diffusion-details}

The diffusion posterior is the 100-dimensional discretized path posterior described in Section~\ref{subsec:langevin-experiments}.  Reference samples are produced by SGLD with 1,000 particles, 100,000 iterations, and step size $10^{-4}$.  Trace plots show posterior mean paths and marginal 95\% posterior intervals against the true latent path and noisy observations.

SIVI, UIVI, AISIVI, and DIVI are trained for 10,000 VI iterations with learning rate $10^{-3}$, batch size 128, step scheduler (decay 0.7 every 1,000 steps), and 5,000-step linear annealing.  KSIVI is trained for 100,000 iterations.

\paragraph{AISIVI.}
AISIVI uses a conditional RealNVP with 16 coupling layers and hidden width 256.  Before joint training, the reverse flow is pre-trained on samples from the initial variational distribution for 2,000 warmup epochs with learning rate $2 \times 10^{-4}$ and batch size 128.  During main training, the flow uses 256 reverse samples, reverse learning rate $2 \times 10^{-4}$, and one reverse update per VI step.

\paragraph{DIVI.}
The proxy score MLP has hidden width 128 and three hidden layers.  The proxy is warm-started for 2,000 epochs with learning rate $10^{-3}$ and batch size 8,192.  During main training the proxy uses reverse batch size 2,048, reverse learning rate $10^{-3}$, and 5 updates per VI step.

\paragraph{KSIVI.}
KSIVI follows the default schedule from \citet{cheng2024kernel}: 100,000 iterations, 50,000-step linear annealing, Gaussian kernel, and a step-decay schedule with factor 0.9 every 10,000 iterations.

\paragraph{Evaluation.}
We evaluate using the expected log marginal likelihood defined in Appendix~\ref{app:metric-details}, with 1,000 reference samples from the SGLD chain and 100,000 model samples from each trained variational distribution.

\subsection{Bayesian neural-network regression}
\label{app:bnn-details}

Each BNN target uses a one-hidden-layer ReLU network with 50 hidden units.  The input dimensions are 13 for Boston, 8 for Concrete, 4 for Power, 9 for Protein, 11 for Wine Red, and 6 for Yacht. The datasets can be found on the UCI Machine Learning website \footnote{\url{https://archive.ics.uci.edu/}}, and all follow CC BY 4.0 License to the best of our knowledge. Including weights, biases, and two precision variables, the corresponding posterior dimensions are 751, 501, 301, 551, 651, and 401.  The datasets contain 506, 1,030, 9,568, 45,730, 1,599, and 308 observations respectively.  Data preparation uses a fixed random seed, a 90/10 train-test split, input standardization based on the training split, and target standardization for training labels.  Test labels are evaluated on their original scale.

\paragraph{Common BNN protocol.}
All methods use batch size 128, gradient clipping at norm 1, and a 100-step MSE pretraining stage that initializes the variational sampler toward predictive weights before posterior training begins.  For evaluation, we apply exponential moving averages of the VI parameters with decay 0.999, following the stabilized BNN evaluation protocol used by KSIVI and SIVI-SM \citep{cheng2024kernel,yu2023semi}.  The VI learning rate is selected separately for each method and target from $\{10^{-3}, 10^{-4}, 10^{-5}\}$ by cross-validation, since high-learning-rate divergence depends on both the method and the target posterior.

\paragraph{AISIVI.}
AISIVI uses a conditional RealNVP with 16 coupling layers and hidden width 256.  The reverse flow is pre-trained on initial VI samples for 500 warmup epochs with learning rate $5 \times 10^{-5}$ and batch size 100.  During main training, the flow uses 256 reverse samples, reverse learning rate $5 \times 10^{-5}$, one reverse update per VI step, and 10,000-step linear annealing.

\paragraph{DIVI.}
The proxy score MLP has hidden width 512 and four hidden layers.  The proxy is warm-started for 5,000 epochs with learning rate $2 \times 10^{-3}$ and batch size 2,048.  During main training the proxy uses 2 updates per VI step.  Linear annealing over 10,000 steps is applied.

\paragraph{KSIVI.}
KSIVI is trained for 100,000 iterations without annealing, using a Gaussian kernel and log-likelihood regularization term with weight 1, with a step-decay schedule of factor 0.9 every 3,000 iterations, following \citet{cheng2024kernel}.

\paragraph{Evaluation.}
BNN evaluation draws 2,000 posterior samples and reports test RMSE and NLL as described in Appendix~\ref{app:metric-details}.

\subsection{Direct score-estimation evaluation}
\label{app:score-estimation}

We evaluate the marginal-score error on fixed variational models obtained after 2,000, 4,000, 6,000, 8,000, and 10,000 DIVI training steps on the X-shaped target. For each model, we generate $N=1{,}024$ evaluation pairs by sampling $\epsilon_i^{(0)}\sim\mathcal{N}(0,I)$ and then $z_i\sim q_\phi(z\mid\epsilon_i^{(0)})$. Following the reverse-conditional score identity used in UIVI~\citep{titsias2019unbiased},
\begin{equation}
  \nabla_z\log q_\phi(z)
  = \mathbb{E}_{q_\phi(\epsilon\mid z)}
    \left[\nabla_z\log q_\phi(z\mid\epsilon)\right],
  \qquad
  q_\phi(\epsilon\mid z)\propto q(\epsilon)q_\phi(z\mid\epsilon),
\end{equation}
we construct a reference score $s_{\mathrm{HMC}}(z_i)$ by sampling the reverse conditional. For each $z_i$, we run 20 HMC chains initialized at the generating $\epsilon_i^{(0)}$, with 1,000 burn-in steps and 5,000 retained samples per chain. We pool the retained samples and average their conditional Gaussian scores. The reported HMC diagnostic is $\hat{R}=1.0065$. The empirical squared score-estimation error is
\begin{equation}
  \widehat{\mathcal{E}}^2
  = \frac{1}{N}\sum_{i=1}^{N}
    \left\|\widehat{s}_{\mathrm{method}}(z_i)-s_{\mathrm{HMC}}(z_i)\right\|_2^2.
\end{equation}

Every estimator is applied to the same fixed DIVI variational model at each checkpoint. SIVI uses 4,096 mixing-prior samples; UIVI uses its native posterior-HMC estimator with five burn-in transitions and five auxiliary samples; AISIVI uses 1,024 importance samples from a normalizing-flow proposal optimized for 10,000 steps separately for each fixed variational distribution; and DIVI uses its checkpointed denoising score network. Table~\ref{tab:score-estimation-error} reports score error means and standard errors across five seeds. Score-evaluation time is measured for a batch of 128 over 100 synchronized calls after fitting the score network or proposal.

\subsection{Discrepancy from the target score}
\label{app:target-score-discrepancy}

We also evaluate the squared $L^2$ discrepancy between each method's estimated variational score and the analytic score of the X-shaped target throughout training. At each checkpoint, we use the variational model obtained by that method and evaluate
\begin{equation}
  D_{\mathrm{target}}
  = \mathbb{E}_{z\sim q_\phi}
    \left\|\widehat{s}_{\mathrm{method}}(z)-\nabla_z\log p(z)\right\|_2^2.
\end{equation}
This diagnostic measures the joint effect of variational approximation and score estimation along each method's own training trajectory. Table~\ref{tab:target-score-discrepancy} reports the mean and standard error across five seeds. DIVI obtains the lowest mean target-score discrepancy at every reported checkpoint.

\begin{table}[!htbp]
\centering
\caption{Squared $L^2$ discrepancy from the analytic X-shaped target score along each method's training trajectory (mean $\pm$ standard error over five seeds). Bold indicates the lowest mean.}
\label{tab:target-score-discrepancy}
\small
\setlength{\tabcolsep}{3pt}
\begin{adjustbox}{max width=\linewidth}
\begin{tabular}{lccccc}
\toprule
Method & 2K steps & 4K steps & 6K steps & 8K steps & 10K steps \\
\midrule
SIVI & $4.479\pm1.186$ & $0.546\pm0.306$ & $0.298\pm0.248$ & $0.270\pm0.221$ & $0.190\pm0.152$ \\
UIVI & $5.062\pm0.226$ & $3.265\pm0.676$ & $3.023\pm0.276$ & $3.052\pm0.321$ & $3.040\pm0.356$ \\
AISIVI & $1704.846\pm1662.003$ & $282.647\pm161.966$ & $581.071\pm429.919$ & $100.476\pm27.904$ & $88.381\pm38.811$ \\
DIVI & $\mathbf{3.694\pm0.200}$ & $\mathbf{0.411\pm0.049}$ & $\mathbf{0.148\pm0.028}$ & $\mathbf{0.142\pm0.038}$ & $\mathbf{0.079\pm0.017}$ \\
\bottomrule
\end{tabular}
\end{adjustbox}
\end{table}

\subsection{Effect and selection of score-update steps and batch size}
\label{app:reverse-updates}

The number of inner score updates per outer VI update, $L_\psi$ (reverse steps), and the score batch size, $B_\psi$ (reverse batch size), control the accuracy and computational cost of score tracking. DIVI warm-starts $\psi$ from the preceding iteration, so each inner loop tracks the score of the current variational distribution. The main experiments use two inner updates for BNN regression and five for the toy and conditioned diffusion targets. The appropriate choice depends on dimension, network capacity, batch size, and how rapidly $q_\phi$ changes. A practical selection rule starts with a larger update count, such as $L_\psi=10$, and halves it while monitoring a distributional metric such as the ELBO or Wasserstein-2 distance. The selected value balances distributional accuracy and computation. For batch-size selection, $B_\psi=2{,}048$ provides a starting point; larger batches can be evaluated using the same distributional metric within the available memory budget.

Tables~\ref{tab:reverse-steps} and~\ref{tab:reverse-batch-size} give ablations on the X-shaped target over five seeds, with the remaining training settings fixed. The first varies $L_\psi$ with $B_\psi=2{,}048$; the second varies $B_\psi$ with $L_\psi=5$. Increasing the update count improves both metrics and increases runtime. Larger score batches improve both metrics with similar measured runtimes over the tested range. The main toy experiments use $L_\psi=5$ and $B_\psi=2{,}048$.

\begin{table}[!htbp]
\centering
\caption{Effect of reverse steps on the X-shaped target with reverse batch size 2,048 (mean $\pm$ standard error over five seeds). Bold indicates the lowest mean quality metric.}
\label{tab:reverse-steps}
\small
\begin{tabular}{lccc}
\toprule
Reverse steps & $D_{\mathrm{KL}}\downarrow$ & W2 $\downarrow$ & Runtime (s) \\
\midrule
1 & $0.2083\pm0.1503$ & $0.2517\pm0.0526$ & $171.6\pm1.1$ \\
2 & $0.02253\pm0.00725$ & $0.2402\pm0.0375$ & $216.5\pm3.4$ \\
5 (default) & $0.004619\pm0.001524$ & $0.1127\pm0.0279$ & $352.8\pm4.8$ \\
10 & $\mathbf{0.002708\pm0.000824}$ & $\mathbf{0.07333\pm0.00553}$ & $571.7\pm7.2$ \\
\bottomrule
\end{tabular}
\end{table}

\begin{table}[!htbp]
\centering
\caption{Effect of reverse batch size on the X-shaped target with five reverse steps (mean $\pm$ standard error over five seeds). Bold indicates the lowest mean quality metric.}
\label{tab:reverse-batch-size}
\small
\begin{tabular}{lccc}
\toprule
Reverse batch size & $D_{\mathrm{KL}}\downarrow$ & W2 $\downarrow$ & Runtime (s) \\
\midrule
512 & $0.01058\pm0.00520$ & $0.1592\pm0.0387$ & $339.6\pm3.6$ \\
1,024 & $0.008070\pm0.001921$ & $0.1480\pm0.0249$ & $339.4\pm5.5$ \\
2,048 (default) & $0.004619\pm0.001524$ & $0.1127\pm0.0279$ & $352.8\pm4.8$ \\
4,096 & $0.004079\pm0.001026$ & $0.07773\pm0.01072$ & $351.1\pm4.1$ \\
8,192 & $\mathbf{0.001978\pm0.000539}$ & $\mathbf{0.06823\pm0.00606}$ & $344.7\pm5.2$ \\
\bottomrule
\end{tabular}
\end{table}

\subsection{Comparison with a RealNVP variational family}
\label{app:flow-comparison}

Normalizing-flow VI uses invertible transformations with tractable densities and scores. Semi-implicit VI represents complex multimodal structure through an implicit mixture. DIVI learns the marginal score of this family. As a concrete comparison, we train a four-layer RealNVP~\citep{dinh2016density} for 10,000 optimization steps with batch size 128 on the eight-Gaussian target and aggregate the results across five seeds. Table~\ref{tab:realnvp-comparison} reports ELBO, Wasserstein-2 distance, and parameter count. DIVI achieves a higher mean ELBO and a lower mean Wasserstein-2 distance in this benchmark with fewer parameters.

\begin{table}[!htbp]
\centering
\caption{DIVI and a four-layer RealNVP variational family on the eight-Gaussian target (mean $\pm$ standard error over five seeds). Bold indicates the best mean or lowest parameter count.}
\label{tab:realnvp-comparison}
\small
\begin{tabular}{lccc}
\toprule
Method & ELBO $\uparrow$ & W2 $\downarrow$ & Parameter count \\
\midrule
DIVI & $\mathbf{-0.355\pm0.082}$ & $\mathbf{0.327\pm0.060}$ & \textbf{26,054} \\
RealNVP & $-0.374\pm0.006$ & $0.402\pm0.053$ & 137,236 \\
\bottomrule
\end{tabular}
\end{table}



\section{Numerical Validations}
\label{app: numerical-experiments}

In this section, we provide empirical diagnostics supporting Assumption~\ref{assump:bounded_score} and Assumption~\ref{assump:bounded_reparam} through numerical experiments along the observed training trajectories. 

For Assumption~\ref{assump:bounded_score}, we provide empirical diagnostics by tracking the fourth moment of scores, both for the reverse model and for the target score. We plot the mean value of the fourth moment of norms of the reverse model score $\mathbb{E}_{z\sim q_\phi} \lVert s_\psi(z) \rVert^4$ as the solid line, and the minimum and maximum values as the shaded regions in Figure~\ref{fig:score_q_4th_moment}, with each of the values calculated across 10 independent seeds. The same plot for fourth moment of the target score $\mathbb{E}_{z\sim q_\phi} \lVert \nabla_z \log p(\mathcal{D}, z) \rVert^4$ is shown in Figure~\ref{fig:score_p_4th_moment}. The figures provide empirical evidence that there exists a uniform bound on the $4^{\text{th}}$ moment of the norm of the scores over the training trajectories, consistent with  Assumption~\ref{assump:bounded_score}.


\begin{figure}[t]
    \centering
    \includegraphics[width=0.8\linewidth]{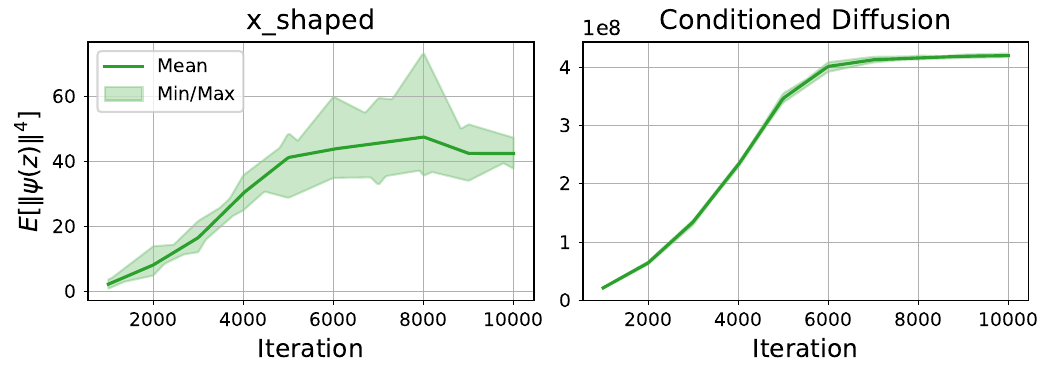}
    \caption{Fourth moment of the norm of the approximated score $\mathbb{E}_{z\sim q_\phi} \lVert s_\psi(z) \rVert^4$ throughout DIVI training for \texttt{x\_shaped} and the conditioned diffusion benchmark, aggregated over 10 seeds.}
    \label{fig:score_q_4th_moment}
\end{figure}

\begin{figure}[t]
    \centering
    \includegraphics[width=0.8\linewidth]{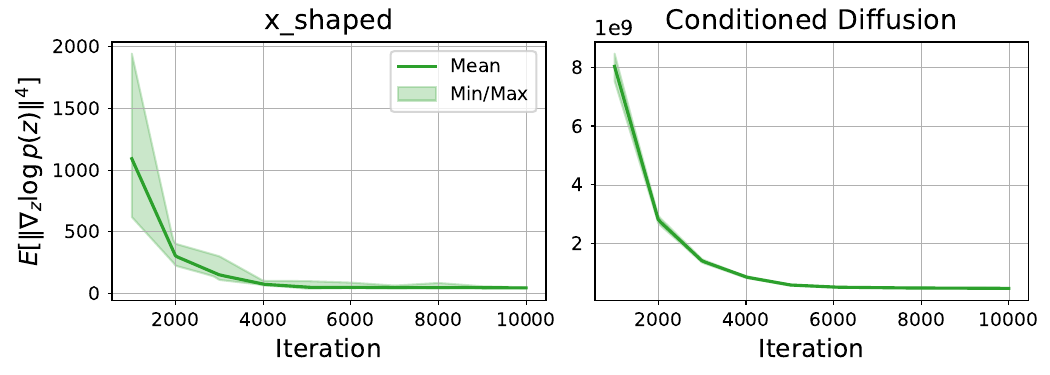}
    \caption{Fourth moment of the norm of the target score $\mathbb{E}_{z\sim q_\phi} \lVert \nabla_z \log p(\mathcal{D}, z) \rVert^4$ throughout DIVI training for \texttt{x\_shaped} and the conditioned diffusion benchmark, aggregated over 10 seeds.}
    \label{fig:score_p_4th_moment}
\end{figure}

For Assumption~\ref{assump:bounded_reparam}, we report the calculated $\max \big\{ \mathbb{E}_{\epsilon} \lVert \nabla_\phi \sigma_\phi(\epsilon)\rVert_2^4, \mathbb{E}_\epsilon \lVert \nabla_\phi \mu_\phi(\epsilon) \rVert_2^4\big\}$ for each checkpoint respectively throughout the training process, and aggregate across 10 independent seeds. We track the metrics for the toy \texttt{x_shaped} and Conditioned Diffusion target distribution. The results are shown in Figure~\ref{fig:bounded_m_eps}, where the solid line indicates the mean while the shaded region demonstrates the minimum and maximum values. These diagnostics suggest that we can obtain a uniform upper bound for the expected $\ell_2$ norm's fourth moment of the VI networks' gradient, justifying Assumption~\ref{assump:bounded_reparam}.

\begin{figure}[t]
    \centering
    \includegraphics[width=0.8\linewidth]{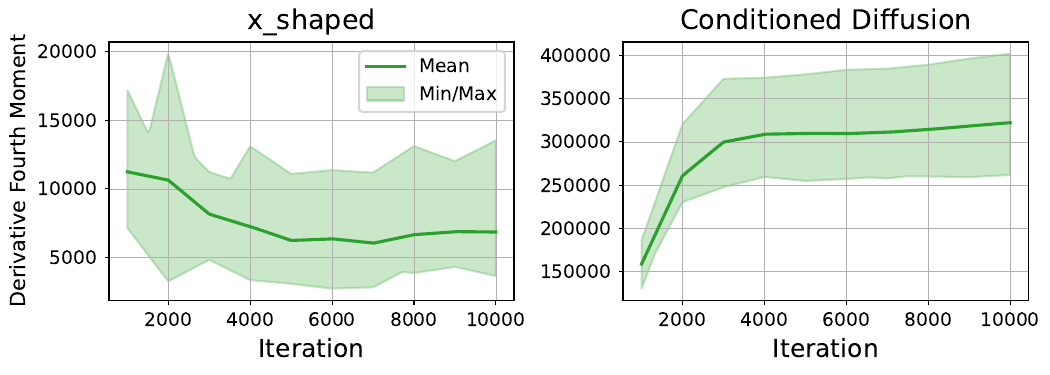}
    \caption{The expected $\ell_2$ norm's fourth moment of the VI model gradient throughout the DIVI training process for \texttt{x_shaped} and Conditioned Diffusion experiments, aggregated across 10 different seeds. This supports Assumption~\ref{assump:bounded_reparam} that the VI model gradients are bounded. }
    \label{fig:bounded_m_eps}
\end{figure}

We also provide empirical evidence to the alternative version of Assumption~\ref{assump:bounded_score}, as present in Appendix~\ref{app:linear-growth}. In Appendix~\ref{app:linear-growth} we proved that Assumption~\ref{assump:bounded_score} can be derived from another linear growth condition $\max\{\lVert \nabla_z \log p(\mathcal{D},z) \rVert, \lVert s_\psi(z)\rVert \} \leq \lambda (\lVert z \rVert + 1)$ together with a bounded fourth-moment condition $\max\{ \mathbb{E}_\epsilon \lVert \mu_\phi(\epsilon) \rVert^4, \mathbb{E}_\epsilon \lVert \sigma_\phi(\epsilon) \rVert^4 \} \leq \zeta $, both of which we assume to hold uniformly over $\phi$. As empirical evidence, we demonstrate for three checkpoints taken at 2000, 5000 and 10000 iterations, that $\log \max\{\lVert \nabla_z \log p(\mathcal{D},z) \rVert, \lVert s_\psi(z)\rVert \} - \log (\lVert z \rVert + 1)$ is bounded in Figure~\ref{fig:score_norm_linearity}. The figure demonstrates different values for samples of $z$ drawn with uniform distribution in $\lVert z \rVert$.

We also plot the $4^{\text{th}}$ moment of the VI model checkpoints $\max\{ \mathbb{E}_\epsilon \lVert \mu_\phi(\epsilon) \rVert^4, \mathbb{E}_\epsilon \lVert \sigma_\phi(\epsilon) \rVert^4 \}$ using 10 different seeds and plot them along the 10 checkpoints we obtained during the DIVI training regime in Figure~\ref{fig:vi_fourth_moment}. The solid line indicates the 10-seed-average of the fourth moments, while the shaded regions indicate the minimum and maximum of recorded values. These plots together demonstrate empirical evidence to support the alternate version of Assumption~\ref{assump:bounded_score} in Appendix~\ref{app:linear-growth}.

\begin{figure}[t]
    \centering
    \includegraphics[width=0.8\linewidth]{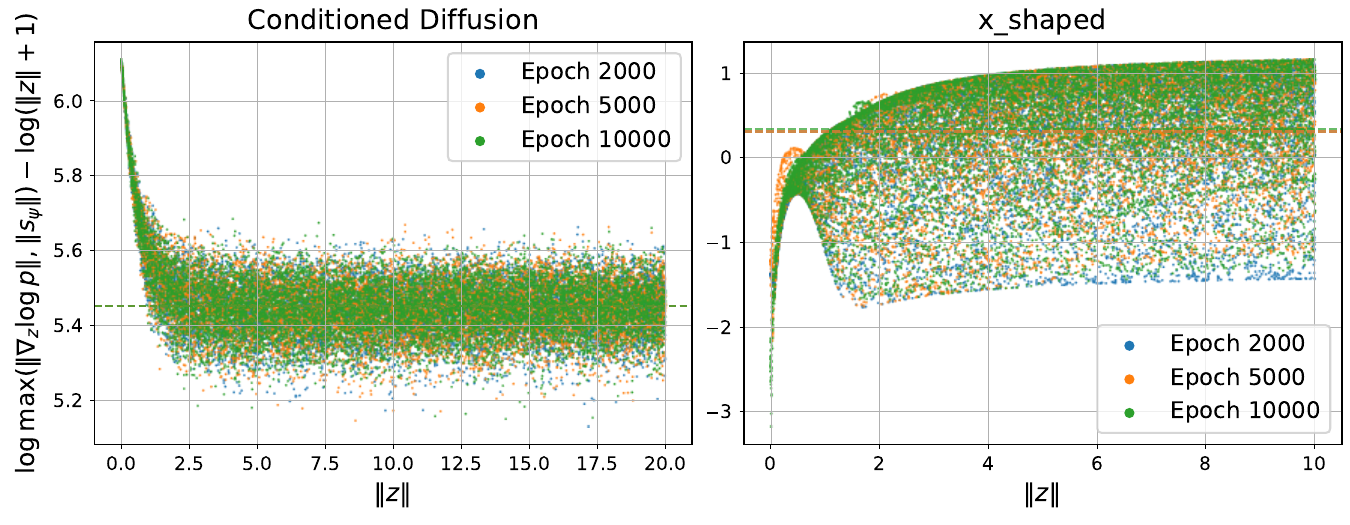}
    \caption{$\log \max\{\lVert \nabla_z \log p(\mathcal{D},z) \rVert, \lVert s_\psi(z)\rVert \} - \log (\lVert z \rVert + 1)$ over different $z$'s, $\phi$'s and $\psi$'s throughout DIVI training for \texttt{x\_shaped} and the conditioned diffusion benchmark.}
    \label{fig:score_norm_linearity}
\end{figure}

\begin{figure}[t]
    \centering
    \includegraphics[width=0.8\linewidth]{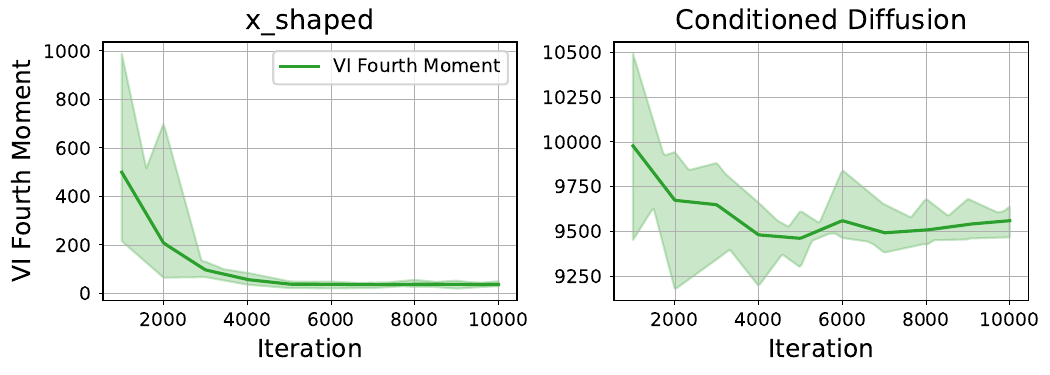}
    \caption{Fourth moment of the VI model reparameterization norm $\max\{ \mathbb{E}_\epsilon \lVert \mu_\phi(\epsilon) \rVert^4, \mathbb{E}_\epsilon \lVert \sigma_\phi(\epsilon) \rVert^4 \}$ throughout DIVI training for \texttt{x\_shaped} and the conditioned diffusion benchmark, aggregated over 10 seeds.}
    \label{fig:vi_fourth_moment}
\end{figure}

%% file: tables/toy_method_grid.tex
\begin{table}[t]
\centering
\caption{Toy target final evaluation metrics (mean $\pm$ standard error). Wall-clock time is averaged across the three targets; bold indicates the best mean.}
\label{tab:toy-method-grid}
\small
\setlength{\tabcolsep}{3pt}
\begin{adjustbox}{max width=\linewidth}
\begin{tabular}{llccccc}
\toprule
Target & Metric & SIVI & KSIVI & AISIVI & UIVI & DIVI \\
\midrule
\multirow{2}{*}{x\_shaped} & $D_{\mathrm{KL}}$ & 0.012 $\pm$ {\footnotesize 0.01} & 0.054 $\pm$ {\footnotesize 0.01} & 0.070 $\pm$ {\footnotesize 0.02} & 0.038 $\pm$ {\footnotesize 0.02} & \textbf{0.009} $\pm$ {\footnotesize \textbf{0.00}} \\
 & W2 & 0.075 $\pm$ {\footnotesize 0.03} & 0.222 $\pm$ {\footnotesize 0.03} & 0.191 $\pm$ {\footnotesize 0.03} & 0.152 $\pm$ {\footnotesize 0.04} & \textbf{0.074} $\pm$ {\footnotesize \textbf{0.01}} \\
\addlinespace[2pt]
\multirow{2}{*}{student\_uc} & $D_{\mathrm{KL}}$ & \textbf{2.676} $\pm$ {\footnotesize \textbf{0.00}} & 4.512 $\pm$ {\footnotesize 1.79} & 2.744 $\pm$ {\footnotesize 0.06} & 2.682 $\pm$ {\footnotesize 0.00} & 2.677 $\pm$ {\footnotesize 0.00} \\
 & W2 $|x|<8$ & 0.124 $\pm$ {\footnotesize 0.01} & 0.749 $\pm$ {\footnotesize 0.63} & 0.166 $\pm$ {\footnotesize 0.05} & 0.145 $\pm$ {\footnotesize 0.00} & \textbf{0.086} $\pm$ {\footnotesize \textbf{0.01}} \\
\addlinespace[2pt]
\multirow{2}{*}{8\_gaussians} & $D_{\mathrm{KL}}$ & \textbf{0.236} $\pm$ {\footnotesize \textbf{0.03}} & 0.830 $\pm$ {\footnotesize 0.08} & 1127.982 $\pm$ {\footnotesize 793.92} & 0.801 $\pm$ {\footnotesize 0.01} & 0.355 $\pm$ {\footnotesize 0.08} \\
 & W2 $|x|<6$ & 0.348 $\pm$ {\footnotesize 0.08} & 0.569 $\pm$ {\footnotesize 0.21} & 0.789 $\pm$ {\footnotesize 0.24} & 0.445 $\pm$ {\footnotesize 0.06} & \textbf{0.311} $\pm$ {\footnotesize \textbf{0.07}} \\
\midrule
\multicolumn{2}{l}{Wall-clock time (s)} & 266 & 377 & 387 & 803 & \textbf{244} \\
\bottomrule
\end{tabular}
\end{adjustbox}
\end{table}